\theoremstyle{plain}
\newtheorem{theorem}{Theorem}[section]
\newtheorem{lemma}[theorem]{Lemma}
\newtheorem{corollary}[theorem]{Corollary}
\theoremstyle{definition}
\theoremstyle{remark}
\newcommand{\parhead}[1]{\textbf{#1}\quad}
\newacronym{llm}{LLM}{Large Language Model}
\newacronym{icl}{ICL}{In-Context Learning}
\newacronym{caos}{CAOS}{Conformal Aggregation of One-Shot Predictors}
\newacronym{sos}{SOS}{Single One-Shot Predictor}
\newacronym{scos}{SCOS}{Split Conformal One-shot Predictor}
\newacronym{raft}{RAFT}{Real World Few-Shot Annotated Tasks}
\icmltitlerunning{CAOS: Conformal Aggregation of One-Shot Predictors}
\crefname{equation}{Eqn.}{Eqns.}
\Crefname{equation}{Eqn.}{Eqns.}
\crefname{section}{Sec.}{Sections}
\Crefname{section}{Sec.}{Sections}
\crefname{equation}{Eqn.}{Eqns.}
\Crefname{equation}{Eqn.}{Eqns.}
\crefname{algorithm}{Alg.}{Algs.}
\Crefname{algorithm}{Alg.}{Algs.}
\crefname{lemma}{lemma}{lemmas}
\Crefname{lemma}{Lemma}{Lemmas}
\crefname{theorem}{Thrm.}{Thrms.}
\Crefname{theorem}{Thrm.}{Thrms.}
\DeclareMathOperator*{\argmin}{arg\,min}
\newcommand{\minsum}[1]{\Sigma^{#1}_{\min}}
\newcommand{\splitconf}{\mathrm{split}}
\newcommand{\calib}{\mathrm{cal}}
\newcommand{\caos}{\mathrm{caos}}
\newcommand{\full}{\mathrm{full}}
\newcommand{\sos}{s_{\pi_i}}
\newcommand{\spi}[3]{s_{\pi_{#1}}(#2,#3)}
\newcommand{\spii}{\spi{i}{X_i}{Y_i}}
\newcommand{\spij}{\spi{i}{X_j}{Y_j}}
\newcommand{\spitest}{\spi{i}{X_{n+1}}{y}}
\newcommand{\Scalibi}{\mathcal{S}_{\splitconf}^{i}}
\newcommand{\Dcalib}{\mathcal{D}_{\calib}}
\newcommand{\Dref}{\mathcal{D}_\textrm{ref}}
\newcommand{\Dn}{\mathcal{D}_n}
\newcommand{\Dtest}{\mathcal{D}_\textrm{test}}
\newcommand{\Dloo}{\Dn^{-i}}
\newcommand{\Daug}{\mathcal{D}_{n+1}^{y}}
\newcommand{\qspliti}{\hat q_{\splitconf}^{i}}
\newcommand{\Cspliti}{\hat C_{\splitconf}^{i}}
\DeclareMathOperator{\QuantileOp}{Quantile}
\newcommand{\Qtl}[2]{\QuantileOp\!\left(#1 \,;\, #2\right)}
\newcommand{\ncalib}{\lvert\Dcalib\rvert}
\newcommand{\scaos}[3]{s_\caos\bigl((#1,#2) \,;\, #3\bigr)}
\newcommand{\sfull}[3]{\tilde{s}_\caos\bigl((#1,#2) \,;\, #3\bigr)}
\newcommand{\sfulli}{\sfull{X_i}{Y_i}{\Daug}}
\newcommand{\sfulltest}{\sfull{X_{n+1}}{y}{\Daug}}
\newcommand{\scaostest}{\scaos{X_{n+1}}{y}{\Dn}}
\newcommand{\scaosi}{\scaos{X_i}{Y_i}{\Dloo}}
\newcommand{\qcaos}{\hat q_{\caos}}
\newcommand{\Ccaos}{\hat C_{\caos}}
\newcommand{\qfull}{\hat q_{\full}^y}
\newcommand{\Cfull}{\hat C_{\full}}
\newcommand{\Scaosi}{S_\caos^i}
\newcommand{\Stest}{S_{n+1}^{y}}
\newcommand{\Sfulli}{\tilde{S}_\full^i}
\newcommand{\Aall}[3]{\mathcal{A}_{#1}\!\left(#2,#3\right)}
\newcommand{\allosscorestest}{\Aall{\Dn}{X_{n+1}}{y}}
\newcommand{\allosscoresi}{\Aall{\Dloo}{X_i}{Y_i}}
\newcommand{\allosscoresfull}{\Aall{\Daug}{X_i}{Y_i}}
\newcommand{\alltestscoresfull}{\Aall{\Daug}{X_{n+1}}{y}}
\begin{document}

\twocolumn[
  \icmltitle{CAOS: Conformal Aggregation of One-Shot Predictors}



  \icmlsetsymbol{equal}{*}

  \begin{icmlauthorlist}
    \icmlauthor{Maja Waldron}{yyy}
  \end{icmlauthorlist}

  \icmlaffiliation{yyy}{Department of Statistics, University of Wisconsin-Madison, USA}

  \icmlcorrespondingauthor{Maja Waldron}{maja.waldron@wisc.edu}

  \icmlkeywords{Machine Learning, ICML}

  \vskip 0.3in
]



\printAffiliationsAndNotice{}  

\begin{abstract}
  One-shot prediction enables rapid adaptation of pretrained foundation models to new tasks using only one labeled example, but lacks principled uncertainty quantification.
While conformal prediction provides finite-sample coverage guarantees, standard split conformal methods are inefficient in the one-shot setting due to data splitting and reliance on a single predictor.
We propose \gls{caos}, a conformal framework that adaptively aggregates multiple one-shot predictors and uses a leave-one-out calibration scheme to fully exploit scarce labeled data.
Despite violating classical exchangeability assumptions, we prove that \gls{caos} achieves valid marginal coverage using a monotonicity-based argument.
Experiments on one-shot facial landmarking and RAFT text classification tasks show that \gls{caos} produces substantially smaller prediction sets than split conformal baselines while maintaining reliable coverage.
\end{abstract}

\section{Introduction}
\glsresetall

Large pretrained foundation models enable rapid adaptation to new tasks using only a handful of labeled examples, without task-specific fine-tuning \citep{brown2020language,bommasani2021opportunities,radford2021learning,simeoni2025dinov3}. In both vision and language, such one-shot and few-shot prediction paradigms are particularly attractive in settings where labeled data are scarce or expensive to obtain.

In these settings, each labeled example can serve as a demonstration that induces a task-specific predictor for new inputs. When multiple labeled examples are available, this naturally gives rise to a \emph{collection} of one-shot predictors rather than a single fixed model. The quality and relevance of these predictors may vary substantially across test inputs, making it unclear which prediction to trust and how to quantify uncertainty in a principled way.

A motivating example arises in medical facial analysis, where clinicians annotate a small number of images with task-specific facial landmarks for pre- and post-operative assessment. Modern vision foundation models can transfer landmark information from a single annotated image to a new image via patch similarity~\citep{simeoni2025dinov3}, yielding a collection of one-shot predictors whose usefulness varies across patients and poses.

Beyond point predictions, practical deployment in such settings requires reliable uncertainty quantification. Conformal prediction provides a principled framework for constructing prediction sets with finite-sample marginal coverage guarantees \citep{shafer2008tutorial,angelopoulos2024theoretical}. Applied naively, a common approach is to equip each one-shot predictor with its own conformal prediction set using split conformal calibration. However, data splitting is statistically inefficient in the low-data regimes that motivate one-shot learning \citep{bashari2025synthetic}, and selecting or aggregating predictors adaptively typically violates the assumptions required for conformal validity \citep{liang2024conformal}.

These challenges motivate the following question:
\emph{Can we aggregate the predictions of all one-shot predictors induced by the available labeled data in a manner that adapts to each test input, uses labeled data efficiently for calibration, and still yields valid conformal prediction sets?}

In this work, we answer this question in the affirmative by introducing \gls{caos}, a conformal framework for one-shot prediction. At a high level, \gls{caos} adaptively aggregates nonconformity scores across multiple one-shot predictors while allowing all labeled examples to participate in calibration through a leave-one-out construction. Despite breaking classical exchangeability at the score level, \gls{caos} achieves valid finite-sample marginal coverage guarantees.

Our contributions are threefold:
(i) we introduce \gls{caos}, a data-efficient conformal framework for aggregating one-shot predictors without requiring disjoint calibration splits;
(ii) we establish finite-sample marginal coverage guarantees for \gls{caos} via a novel monotonicity-based reduction to full conformal prediction; and
(iii) we demonstrate empirically that \gls{caos} yields substantially smaller prediction sets than split conformal baselines while maintaining reliable coverage in both vision and language one-shot tasks.
\section{Related Work}

\Gls{icl}, first popularized for language models \citep{brown2020language} and closely related to zero-shot and few-shot prediction paradigms \citep{bommasani2021opportunities}, is now a standard mechanism for adapting large pretrained models in both language and vision \citep{radford2021learning,simeoni2025dinov3}. In parallel, conformal prediction provides a distribution-free framework for uncertainty quantification with finite-sample marginal coverage guarantees \citep{shafer2008tutorial,angelopoulos2023conformal,angelopoulos2024theoretical}, and is particularly appealing in foundation-model settings because it can treat the underlying predictor as a black box.

A growing body of work applies conformal prediction to foundation models in zero-shot, one-shot, and few-shot regimes \citep{fisch2021few,quach2023conformal,su2024api,wang2024conu,fillioux2024foundation}; see also \citep{park2023few,vishwakarma2024prune,pantelidis2025efficient,ye2025data,lin2025domain,tayebati2025learning,silva2025conformal,silva2025trustworthy,wang2025sconu,xu2025tecp}. Many of these methods adopt split conformal calibration for simplicity and modularity. In the one-shot setting, each labeled example induces its own predictor, yielding a \emph{collection} of predictors rather than a single model. Existing conformal approaches typically do not address uncertainty quantification for collections of one-shot predictors whose relevance may vary across test inputs.

The one-shot, low-data regime also sharpens a classical trade-off in conformal prediction between statistical and computational efficiency. Split conformal is computationally efficient but can be statistically inefficient when labeled data are scarce \citep{bashari2025synthetic}, while full conformal can be more statistically efficient but is often computationally prohibitive. Data-reuse schemes such as cross-conformal prediction \citep{vovk2015cross} and related leave-one-out approaches \citep{barber2021predictive} improve data efficiency, but their validity analyses frequently require additional arguments and may introduce slack when the resulting conformity scores are not exchangeable \citep{gasparin2025improving}. These considerations motivate methods that reuse labeled data while retaining sharp finite-sample guarantees.

When confronted with a pool of predictors, a natural strategy is to either \emph{select} a single predictor \citep{liang2024conformal,bai2024optimized,hegazy2025valid} or \emph{aggregate} predictors or their uncertainty sets \citep{gasparin2024merging,rivera2024conformal,alami2025symmetric}. However, many selection- and aggregation-based conformal approaches preserve validity by relying on disjoint data splits, extra calibration data, or conservative corrections.

These limitations trace to a central requirement in classical conformal prediction: standard finite-sample guarantees rely on exchangeability of the calibration and test conformity scores \citep{shafer2008tutorial,angelopoulos2023conformal,angelopoulos2024theoretical}. When exchangeability is violated at the \emph{data} level, for example under distribution shift \citep{tibshirani2019conformal,barber2023conformal} or in online prediction \citep{gasparin2024conformal,sale2025online}, validity is often recovered via additional assumptions or correction factors that can loosen guarantees \citep{barber2023conformal}. In the one-shot setting studied here, non-exchangeability instead arises \emph{algorithmically}, because instance-adaptive selection or aggregation over a pool of reference-conditioned predictors breaks exchangeability at the score level, preventing a direct application of classical conformal arguments. We address these challenges with \gls{caos}, which performs instance-adaptive aggregation with leave-one-out calibration while retaining $1-\alpha$ finite-sample marginal coverage.
\section{Background}
\label{sec:background}
We now formalize the one-shot prediction setting discussed above and introduce the notation used throughout the paper. The basic building blocks of our framework are one-shot predictors and their associated nonconformity scores, while split conformal calibration is reviewed as a reference point.

We assume access to a pool of $n$ labeled examples
$\mathcal{D}_n = \{(X_i,Y_i)\}_{i=1}^n$.
Each example induces a \emph{one-shot predictor} through a fixed prediction mechanism $\pi$,
defined as
\begin{align}
\pi_i(\cdot \mid x)
\;:=\;
\pi(\cdot \mid x;\,(X_i,Y_i)).
\end{align}
Because the predictor $\pi_i$ depends on the specific reference example $(X_i,Y_i)$,
predictions for a single test input $x$ may vary substantially across the $n$ reference examples.

To assess how well a candidate label $y$ conforms to a test input $x$ under a one-shot predictor $\pi_i$,
we associate each predictor with a \emph{nonconformity score}
\begin{align}
s_{\pi_i}(x,y) \in \mathbb{R},
\end{align}
where smaller values indicate more conforming predictions.
These scores will allow us to convert one-shot predictions into conformal prediction sets. We next describe two instantiations of one-shot prediction and their corresponding
nonconformity scores that are used in our experiments.

\parhead{Example 1: Landmark Transfer by Patch Similarity.}
We consider a landmark localization task in which each reference image is divided into $K$ patches
and labeled with $Y_i \in \mathcal{Y} = \{1,\ldots,K\}$ indicating which patch contains the landmark of interest.
One-shot landmark transfer leverages the embedding space of a pretrained vision model
(e.g., DinoV3~\citep{simeoni2025dinov3}) to predict the location of the landmark in a new image $x$ based on patch similarity.
Let $e_x(y)$ denote the embedding of patch $y$ in image $x$.

The nonconformity score induced by reference example $(X_i,Y_i)$ is defined as
\begin{align}
s_{\pi_i}(x,y)
=
1 - \mathrm{sim}\!\bigl(e_x(y), e_{X_i}(Y_i)\bigr),
\label{eqn:patch_similarity}
\end{align}
where $\mathrm{sim}(\cdot,\cdot)$ denotes cosine similarity.
A point prediction corresponds to the most conforming label,
\begin{align}
\hat y = \argmin_{y \in \mathcal{Y}} \, s_{\pi_i}(x,y).
\label{eqn:predict}
\end{align}

\parhead{Example 2: One-Shot Text Classification with LLMs.}
We next consider text classification with a fixed label set $\mathcal{Y}$ using a \gls{llm}.
Given a labeled reference example $(X_i,Y_i)$ and a test input $x$,
we construct a prompt of the form
\begin{align}
&\texttt{Given input }X_i\texttt{, the correct output is }Y_i\texttt{.} \nonumber\\
&\texttt{Now for input }x\texttt{, the output is:}
\end{align}
(see \Cref{appendix:prompt} for the full prompt template),
which conditions the model on a single demonstration.

For each candidate label $y \in \mathcal{Y}$, we define the nonconformity score as the
average negative log-likelihood of $y$ under the model,
\begin{align}
s_{\pi_i}(x,y)
=
\mathrm{AvgNLL}\!\bigl(y \mid \texttt{prompt}(X_i,Y_i,x)\bigr),
\label{eqn:llm}
\end{align}
where the negative log-likelihood is normalized by the token length of $y$
to avoid penalizing longer labels.
As in \Cref{eqn:predict}, a point prediction is obtained by minimizing this score over $y \in \mathcal{Y}$.
While \glspl{llm} can condition on multiple in-context examples,
in the one-shot setting each reference example induces its own predictor $\pi_i$.

Large pretrained models may yield strong point predictions when used as one-shot predictors.
However, there is also a need to assess the reliability of these predictions for a new input.
In the next section, we introduce split conformal one-shot prediction,
which equips each one-shot predictor $\pi_i$ with a data-dependent prediction set
that enjoys finite-sample marginal coverage guarantees.

\parhead{Split conformal one-shot prediction.}
A natural approach to uncertainty quantification for one-shot predictors is split conformal prediction.
Given labeled data split into a reference set $\Dref$ and a disjoint calibration set $\Dcalib$,
each reference example $(X_i,Y_i) \in \Dref$ induces a one-shot predictor $\pi_i$,
which can be equipped with a conformal prediction set using calibration data.

For a fixed predictor $\pi_i$, calibration scores are obtained by evaluating the nonconformity score
$s_{\pi_i}(X_j,Y_j)$ on calibration examples $(X_j,Y_j) \in \Dcalib$,
yielding the score set
\begin{align}
\label{eqn:split_scores}
\Scalibi = \{ s_{\pi_i}(X_j,Y_j) : (X_j,Y_j) \in \Dcalib \}.
\end{align}
From this set, a split conformal threshold is computed as
\begin{align}
\label{eqn:split_calib}
\qspliti = \Qtl{\Scalibi}{(1-\alpha)\bigl(1+1/\ncalib\bigr)},
\end{align}
leading to the prediction set
\begin{align}
\label{eqn:split_set}
\Cspliti(X_{n+1}) = \{ y \in \mathcal{Y} : s_{\pi_i}(X_{n+1},y) \le \qspliti \}.
\end{align}
Under exchangeability of the calibration data and test point, each $\Cspliti(X_{n+1})$
satisfies the marginal coverage guarantee
\begin{align}
\mathbb{P}\!\bigl(Y_{n+1} \in \Cspliti(X_{n+1})\bigr) \ge 1-\alpha.
\label{eqn:target_cov}
\end{align}

Split conformal one-shot prediction yields a valid prediction set for each reference example, but the resulting sets vary widely in size and usefulness, reflecting heterogeneity in one-shot predictor quality.
Ideally, predictions would adaptively select or combine the most relevant reference examples for each test input. 
This creates a basic tension in the one-shot setting: predictive performance benefits from using all labeled examples, while conformal calibration restricts such reuse.
We resolve this tension with \glsentrylong{caos} (\gls{caos}), a conformal framework that aggregates one-shot predictors 
while allowing all labeled examples to participate in calibration.

\section{Method}
\label{sec:caos}

\parhead{Problem setting and goal.}
We consider a one-shot prediction setting with $n$ labeled examples
$\Dn=\{(X_i,Y_i)\}_{i=1}^n$.
Each labeled example induces a one-shot predictor $\pi_i$ with an
associated nonconformity score $s_{\pi_i}$.
Our goal is to construct prediction sets
$\Ccaos(X_{n+1})$ that are small but satisfy finite-sample marginal coverage guarantees.
Smaller prediction sets correspond to more informative predictions and
reduced predictive uncertainty.

No single reference example is uniformly optimal:
the informativeness of one-shot predictors varies substantially across
reference examples and may depend on the test input.
The central challenge is therefore to design a method that
(i) aggregates information across multiple one-shot predictors,
(ii) calibrates the resulting aggregate using the same limited labeled data,
and (iii) retains finite-sample coverage guarantees.
We address this challenge with \gls{caos}, a simple aggregation strategy that thanks to a novel theoretical argument allows all labeled data to participate in aggregation and calibration while maintaining valid coverage.

\begin{algorithm}[t]
\caption{\gls{caos} Prediction}
\label{alg:caos}
\begin{algorithmic}[1]
\STATE \textbf{Input:} Labeled data $\Dn=\{(X_i,Y_i)\}_{i=1}^n$;
test input $X_{n+1}$; target coverage level $1-\alpha$;
integer $k\ge 1$; one-shot nonconformity score $s_{\pi_i}$.
\STATE Compute aggregated calibration scores via \Cref{eqn:caos_loo}:\\
$\Scaosi \gets \scaosi$ for $i=1,\ldots,n$.
\STATE Compute threshold via \Cref{eqn:caos_calib}:\\
$\qcaos \gets \Qtl{\{\Scaosi\}_{i=1}^n}{(1-\alpha)(1+1/n)}$.
\STATE Compute test scores for all $y\in\mathcal{Y}$ via \Cref{eqn:caos_agg}:\\
    $S_{n+1}^{y}\gets \scaostest$.
\STATE \textbf{Return} calibrated \gls{caos} prediction set via \Cref{eqn:caos_set}:\\
$\Ccaos(X_{n+1}) \gets \{y\in\mathcal{Y}: S_{n+1}^{y}\le \qcaos\}$.
\end{algorithmic}
\end{algorithm}

\parhead{\Gls{caos} score aggregation.}
For notational convenience, given a dataset $\mathcal{D}$ and a target
pair $(x,y)$, we define the multiset of one-shot nonconformity scores
\begin{align}
\Aall{\mathcal{D}}{x}{y}
\;:=\;
\{ s_{\pi_j}(x,y) : (X_j,Y_j)\in\mathcal{D} \}.
\end{align}
That is, $\Aall{\mathcal{D}}{x}{y}$ collects the nonconformity scores for
$(x,y)$ induced by all reference examples in $\mathcal{D}$.

A key observation in the one-shot setting is that only a subset of
reference examples are informative for a given test input.
For instance, in landmark transfer, reference images with locally
similar appearance yield more meaningful patch-similarity scores.
Aggregating all $n$ one-shot predictors uniformly therefore dilutes
useful information with noise.

To address this, \gls{caos} aggregates scores by focusing on the $k$
\emph{most informative} reference examples.
Given a multiset of scores $\mathcal{A}=\{a_i\}_{i=1}^n$ and an integer
$k\ge1$, we define the operator that aggregates the $k$ smallest values in $\mathcal{A}$ as,
\begin{align}
\label{eqn:minsum}
\minsum{k}(\mathcal{A})
\;:=\;
\sum_{j=1}^{k} a_{(j)},
\end{align}
where $a_{(1)}\le a_{(2)}\le\cdots\le a_{(n)}$ are the ordered values in
$\mathcal{A}$.
We will use this operator to select the $k$ reference examples that are most compatible with the candidate output $y$.

For a test input $X_{n+1}$ and candidate output $y$, let
$\allosscorestest$ denote the pool of all
one-shot nonconformity scores induced by examples in $\Dn$.
The \gls{caos} aggregated nonconformity score is defined as
\begin{align}
\label{eqn:caos_agg}
\scaostest
\;:=\;
\minsum{k}(\allosscorestest),
\end{align}
where $k < n$ is a small constant (we fix $k=3$; App.~\ref{app:k_ablation} studies sensitivity to $k$).
This score is small when several reference examples strongly support the
candidate output~$y$, and large otherwise.
We explicitly indicate the dependence on $\Dn$, since unlike split
conformal prediction, \gls{caos} does not maintain disjoint reference and
calibration sets.

\parhead{\Gls{caos} calibration and prediction sets.}
To obtain valid prediction sets, the aggregated \gls{caos} nonconformity scores must be calibrated. We adopt a \emph{leave-one-out} calibration strategy. Specifically, for each labeled example $(X_i, Y_i)\in \Dn$, the \gls{caos} calibration score is computed without using $(X_i, Y_i)$ as a reference example. We therefore define the leave-one-out dataset $\Dloo = \Dn \setminus {(X_i, Y_i)}$ and let $\allosscoresi$ denote the collection of one-shot nonconformity scores induced by all reference examples in $\Dloo$.

The \gls{caos} calibration scores are
\begin{align}
\label{eqn:caos_loo}
\scaosi
\;=\;
\minsum{k}(\allosscoresi).
\end{align}
Collecting these scores $\Scaosi:=\scaosi$ over all $i=1,\ldots,n$, we compute the empirical
quantile
\begin{align}
\label{eqn:caos_calib}
\hat q_{\mathrm{caos}}
=
\Qtl{\{\Scaosi\}_{i=1}^n}{(1-\alpha)(1+1/n)},
\end{align}
which mirrors the finite-sample correction used in split conformal
prediction.

The resulting \gls{caos} prediction set for a new input $X_{n+1}$ is
\begin{align}
\label{eqn:caos_set}
\Ccaos(X_{n+1})
=
\bigl\{y\in\mathcal{Y}:\scaostest\le \hat q_{\mathrm{caos}}\bigr\}.
\end{align}

\parhead{\Gls{caos}' coverage guarantees.}
The leave-one-out calibration procedure treats test points differently
from calibration data and therefore breaks the classical exchangeability
assumptions used in conformal prediction. 
Yet, we will establish the following result for \gls{caos}.
\begin{theorem}[Finite-sample coverage of \gls{caos}]
\label{thm:caos-coverage}
Assume that $\Dn$ and the test example
$(X_{n+1},Y_{n+1})$ are exchangeable.
Then the \gls{caos} prediction set defined in
\Cref{eqn:caos_set} satisfies the finite-sample marginal coverage
guarantee
\begin{align}
\Pr\!\bigl(Y_{n+1}\in\Ccaos(X_{n+1})\bigr)
\;\ge\;
1-\alpha.
\end{align}
\end{theorem}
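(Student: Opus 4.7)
The strategy is a monotonicity-based reduction to an oracle full-conformal argument on an augmented dataset that includes the (unknown) test example.

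\emph{Step 1 (oracle full-conformal scores).}
Specializing the candidate to $y=Y_{n+1}$, consider the augmented dataset $\Daug = \Dn\cup\{(X_{n+1},Y_{n+1})\}$, and for each index $i\in\{1,\ldots,n+1\}$ define the leave-one-out aggregated score
\[
\tilde{S}_\full^i \;:=\; \minsum{k}\!\bigl(\Aall{\Daug\setminus\{(X_i,Y_i)\}}{X_i}{Y_i}\bigr).
\]
The map $\Daug \mapsto (\tilde{S}_\full^1,\ldots,\tilde{S}_\full^{n+1})$ is equivariant under relabeling of the $n+1$ points of $\Daug$, so the exchangeability of $\Daug$---inherited from the assumed exchangeability of $\Dn$ and $(X_{n+1},Y_{n+1})$---makes $\tilde{S}_\full^1,\ldots,\tilde{S}_\full^{n+1}$ exchangeable. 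The standard conformal quantile lemma then gives
\[
\Pr\!\bigl(\tilde{S}_\full^{n+1} \le \Qtl{\{\tilde{S}_\full^i\}_{i=1}^{n}}{(1-\alpha)(1+1/n)}\bigr) \;\ge\; 1-\alpha.
\]

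\emph{Step 2 (monotonicity of $\minsum{k}$).}
I would verify that $\minsum{k}$ is set-monotone: if $\mathcal{A}\subseteq\mathcal{A}'$ and both contain at least $k$ elements, then $\minsum{k}(\mathcal{A}') \le \minsum{k}(\mathcal{A})$, since enlarging the multiset can only lower the $k$ smallest order statistics. For a calibration index $i\in\{1,\ldots,n\}$, the full-conformal reference set $\Daug\setminus\{(X_i,Y_i)\}$ equals $\Dloo \cup \{(X_{n+1},Y_{n+1})\}$, which strictly contains $\Dloo$; applying monotonicity yields $\tilde{S}_\full^i \le \Scaosi$ pointwise. At the test point, $\Daug\setminus\{(X_{n+1},Y_{n+1})\}=\Dn$, which is exactly the reference set used by \gls{caos} at $y=Y_{n+1}$, so $\tilde{S}_\full^{n+1} = \scaos{X_{n+1}}{Y_{n+1}}{\Dn}$ with equality.

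\emph{Step 3 (quantile monotonicity and conclusion).}
Pointwise inequalities between real-valued random variables lift to inequalities between empirical quantiles, so $\Qtl{\{\tilde{S}_\full^i\}_{i=1}^{n}}{\beta} \le \Qtl{\{\Scaosi\}_{i=1}^{n}}{\beta}$ for every level $\beta$. Applying this with $\beta=(1-\alpha)(1+1/n)$ and combining with Steps 1 and 2 gives
\[
\Pr\!\bigl(\scaos{X_{n+1}}{Y_{n+1}}{\Dn} \le \qcaos\bigr) \;\ge\; 1-\alpha,
\]
which by the definition of $\Ccaos$ in \Cref{eqn:caos_set} is equivalent to the claim $\Pr(Y_{n+1}\in\Ccaos(X_{n+1})) \ge 1-\alpha$.

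\emph{Main obstacle.} The delicate point is the direction of the monotonicity in Step 2: the structural asymmetry of \gls{caos}---calibration scores use $n-1$ references while the test score uses $n$---is precisely what supplies the slack needed to replace the exchangeable-but-unobservable $\tilde{S}_\full^i$ with the implementable $\Scaosi$ without any correction factor. Recognizing that the inclusion $\Dloo \subseteq \Daug\setminus\{(X_i,Y_i)\}$ points in the right direction for $\minsum{k}$, and that this slack is exactly cancelled by the extra calibration point appearing in the full-conformal construction, is the non-routine part of the argument; once these two ingredients are in place, the conclusion is essentially automatic from the classical quantile lemma.
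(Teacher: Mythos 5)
Your proof is correct and uses essentially the same monotonicity-based reduction to full conformal prediction as the paper. The paper packages the argument slightly differently, defining a full CAOS prediction-set procedure $\Cfull(X_{n+1})$ over all candidate $y$ (Lemmas~\ref{lem:caos-symmetry}--\ref{lem:caos-comparison}, Corollary~\ref{cor:caos-set-inclusion}) and showing $\Cfull(X_{n+1}) \subseteq \Ccaos(X_{n+1})$, whereas you specialize immediately to $y = Y_{n+1}$ and argue coverage directly via the quantile lemma on the augmented scores; the substantive ingredients --- symmetry/equivariance of the full score under permutations of the augmented dataset, monotonicity of $\minsum{k}$ with respect to reference-set inclusion, test-score equivalence and calibration-score dominance, and quantile monotonicity --- match exactly.
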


The proof is nontrivial because the \gls{caos} scores are
not exchangeable.
In \Cref{sec:theory}, we establish
\Cref{thm:caos-coverage} via a reduction to a theoretical full conformal
variant of \gls{caos}.

\section{Theory: Coverage Guarantees for CAOS}
\label{sec:theory}

This Section establishes \Cref{thm:caos-coverage}.
The proof proceeds via a reduction to a theoretical full conformal version of
\gls{caos}.
The argument relies on two key observations:
(i) a monotonicity property of the \gls{caos} nonconformity score with respect to
the reference dataset, and
(ii) a set inclusion result showing that \gls{caos} prediction sets contain those
produced by full conformal prediction.
Together, these imply that \gls{caos} inherits the finite-sample marginal coverage
guarantee of full conformal prediction, despite using a computationally cheaper
scoring rule that does not yield exchangeable scores.

\subsection{Theoretical Construction of Full CAOS}
Full conformal prediction provides finite-sample marginal coverage but is computationally prohibitive. It requires recalibration for each possible label for each test input. We present a full-conformal version of \gls{caos} not intended for practical use, solely as an analytical tool to establish \gls{caos}' coverage guarantees.

For a test example $X_{n+1}$ and every potential label $y$, we define the augmented dataset $\Daug = \Dn \cup \{(X_{n+1}, y)\}$ and the augmented set of one-shot scores $\allosscoresfull =  \{s_{\pi_j}(X_i, Y_i) : (X_j, Y_j) \in \mathcal{D}^{y}_{n+1}\}$.
Note that $\allosscoresfull$ augments the leave-one-out pool $\allosscoresi$
with both the self-score $s_{\pi_i}(X_i,Y_i)$ and the score induced by the
hypothetical test pair $(X_{n+1},y)$.
That is, $\allosscoresfull$ contains all one-shot nonconformity scores for the target pair $(X_i,Y_i)$ obtained from the augmented set of reference examples. As an alternative to \Cref{eqn:caos_loo}, we define the full \gls{caos} nonconformity scores
\begin{align}
\label{eqn:caos_full_scores}
\Sfulli &= \sfulli\\
& = \minsum{k} \bigl(\allosscoresfull - \{\spii\} \bigr),\nonumber 
\end{align}
where $-$ denotes the multiset difference as defined in \Cref{eqn:diff}, App.~\ref{appendix:notation}.
The full \gls{caos} scores differ slightly from the leave-one-out
definition of $s_{\caos}$ in \Cref{eqn:caos_loo}. 
However,
it is deliberately chosen to i) establish exchangeability of full \gls{caos} and ii) such that, the relationship between the scores of \gls{caos} and full \gls{caos} implies that the full conformal coverage guarantees also extend to \gls{caos}.

Based on the calibration scores $\tilde{S}_\full^i$, full \gls{caos} requires a separate threshold for each test input $y$
\begin{align}
\label{eqn:caos_full_calib}
\qfull = \Qtl{\{\Sfulli\}_{i=1}^n}{(1-\alpha)(1+1/n)},
\end{align}
to construct the full conformal prediction set
\begin{align}
\label{eqn:caos_full_set}
\Cfull(X_{n+1})=\bigl\{y : \sfulltest \le \qfull \bigr\}.
\end{align}

To establish conformal validity, full conformal prediction requires the scores to be symmetric.

\begin{lemma}[Symmetry of $\tilde{s}_{\caos}$]
\label{lem:caos-symmetry}
For any dataset $\mathcal{D}$ and any permutation $\sigma$ of its elements,
\[
\tilde{s}_{\caos}((x,y);\mathcal{D})
=
\tilde{s}_{\caos}((x,y);\mathcal{D}_\sigma).
\]
That is, $\tilde{s}_{\caos}$ is \emph{symmetric} in its dataset argument.
\end{lemma}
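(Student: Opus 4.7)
My approach is to observe that $\tilde{s}_{\caos}((x,y);\mathcal{D})$ is constructed as a composition of operations that each depend on $\mathcal{D}$ only through a multiset-valued intermediate. Since multisets are inherently unordered, the composition is automatically invariant under any permutation $\sigma$ of the elements of $\mathcal{D}$, which is exactly the symmetry claim.

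First I would unpack the definition in \Cref{eqn:caos_full_scores}: $\tilde{s}_{\caos}((x,y);\mathcal{D}) = \minsum{k}\bigl(\Aall{\mathcal{D}}{x}{y} - \{s^\star\}\bigr)$, where $s^\star$ denotes the self-score, i.e., the score of the predictor induced by the pair in $\mathcal{D}$ equal to $(x,y)$ (when such a pair is present), removed via multiset difference. I would then check symmetry layer by layer. The pool $\Aall{\mathcal{D}}{x}{y}=\{s_{\pi_j}(x,y):(X_j,Y_j)\in\mathcal{D}\}$ is invariant under permutations of $\mathcal{D}$: by the definition of $\pi_j$ as $\pi(\cdot\mid x;(X_j,Y_j))$, the score $s_{\pi_j}(x,y)$ depends on the reference pair $(X_j,Y_j)$ only through its value, not via its index, so reordering $\mathcal{D}$ merely relabels the enumeration while yielding the same multiset of score values.

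Next, the multiset difference (\Cref{eqn:diff}, App.~\ref{appendix:notation}) is by definition a function of two multisets, and the value $s^\star$ is determined by $(x,y)$ alone and is unaffected by permutations of $\mathcal{D}$. Hence $\Aall{\mathcal{D}}{x}{y}-\{s^\star\}$ is unchanged under permutation of $\mathcal{D}$. Finally, $\minsum{k}$ (\Cref{eqn:minsum}) sums the $k$ smallest entries of its multiset input, which again depends only on the multiset. Chaining these three invariances yields $\tilde{s}_{\caos}((x,y);\mathcal{D}_\sigma)=\tilde{s}_{\caos}((x,y);\mathcal{D})$.

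The main thing to verify carefully — rather than a genuine obstacle — is the bookkeeping around the self-score when $(x,y)\in\mathcal{D}$: one must ensure that the subtraction removes exactly one copy of $s^\star$, even if the same numerical value happens to appear multiple times in $\Aall{\mathcal{D}}{x}{y}$ (for instance, when two distinct reference examples assign identical scores to $(x,y)$). This is precisely what the multiset-difference convention in \Cref{eqn:diff} encodes. With that convention in hand, the lemma follows immediately from the fact that $\tilde{s}_{\caos}$ factors through permutation-invariant multiset operations, with no additional technical work required.
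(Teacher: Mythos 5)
Your proposal is correct and follows essentially the same route as the paper's proof: it observes that $\Aall{\mathcal{D}}{x}{y}$ is a multiset and hence invariant under permutation of $\mathcal{D}$, that the subtracted self-score is determined by $(x,y)$ independently of the ordering, and that $\minsum{k}$ and the multiset difference depend only on their multiset arguments. The paper's version is a bit terser; your explicit remark about removing exactly one copy of $s^\star$ when duplicate score values occur is a useful clarification of the multiset-difference convention but does not change the argument.
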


(Proof in Appendix, \Cref{appendix:symmetry_proof}.)
By Lemma~\ref{lem:caos-symmetry} 
and standard full conformal arguments
\cite{angelopoulos2024theoretical}, the prediction set defined in
\Cref{eqn:caos_full_set} achieves finite-sample marginal coverage guarantees at the target level. We prove next that these guarantees extend to \gls{caos}.

\subsection{Conformal Validity of CAOS}
We now prove \Cref{thm:caos-coverage} by comparing the two constructions of
\gls{caos}: the leave-one-out calibrated prediction set defined in
\Cref{eqn:caos_set} and the theoretical full conformal construction
defined in \Cref{eqn:caos_full_set}.
The proof leverages the conformal validity of full \gls{caos} via two steps:
(i)~a monotonicity lemma showing that the \gls{caos} nonconformity scores can only improve
as the reference set grows, and
(ii)~a comparison lemma relating leave-one-out \gls{caos} scores to their
full conformal analogues.
Together, these yield the set inclusion
$\Cfull(X_{n+1}) \subseteq \Ccaos(X_{n+1})$,
which implies coverage.

The following monotonicity lemma formalizes the intuition, that adding more reference examples will only affect the $\minsum{k}$ aggregation if it introduces small scores, and in this case, it can only improve the aggregated score.
\begin{lemma}[Monotonicity of the \gls{caos} score]
\label{lem:caos-monotonicity}
Fix a target pair $(x,y)$ and an integer $k\ge 1$.
Let $\mathcal{D}\subseteq \mathcal{D}'$ be two reference datasets with
$|\mathcal{D}|\ge k$.
Then
\[
s_{\caos}\bigl((x,y);\mathcal{D}'\bigr)
\;\le\;
s_{\caos}\bigl((x,y);\mathcal{D}\bigr).
\]
That is, the \gls{caos} nonconformity score can only improve (decrease) as the reference set
grows.
\end{lemma}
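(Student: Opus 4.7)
The plan is to unpack the definition of $s_{\caos}$ and reduce the claim to a standard monotonicity property of truncated sums of order statistics. Since $s_{\caos}((x,y);\mathcal{D}) = \minsum{k}(\Aall{\mathcal{D}}{x}{y})$ and $\mathcal{D}\subseteq \mathcal{D}'$, the multiset inclusion
\[
\Aall{\mathcal{D}}{x}{y} \;\subseteq\; \Aall{\mathcal{D}'}{x}{y}
\]
follows directly, because every reference example in $\mathcal{D}$ contributes the same one-shot score to both pools. So it suffices to prove the following abstract fact: for any multisets $\mathcal{A}\subseteq \mathcal{A}'$ of real numbers with $|\mathcal{A}|\ge k$, we have $\minsum{k}(\mathcal{A}') \le \minsum{k}(\mathcal{A})$.

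I would prove this by comparing order statistics pointwise. Writing $a_{(1)}\le\cdots\le a_{(|\mathcal{A}|)}$ for the ordered values of $\mathcal{A}$ and $a'_{(1)}\le\cdots\le a'_{(|\mathcal{A}'|)}$ for those of $\mathcal{A}'$, I claim that $a'_{(j)} \le a_{(j)}$ for every $j\le |\mathcal{A}|$. This follows from the variational characterization
\[
a'_{(j)} \;=\; \min_{\substack{S\subseteq \mathcal{A}' \\ |S|=j}}\, \max_{a\in S} a,
\]
since any size-$j$ subset of $\mathcal{A}$ remains a valid candidate in $\mathcal{A}'$, so the minimum over the larger collection of candidates is no larger than that over $\mathcal{A}$ alone. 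Summing this inequality over $j=1,\ldots,k$ (which requires $k\le |\mathcal{A}|$, as hypothesized) gives
\[
\minsum{k}(\mathcal{A}') \;=\; \sum_{j=1}^{k} a'_{(j)} \;\le\; \sum_{j=1}^{k} a_{(j)} \;=\; \minsum{k}(\mathcal{A}),
\]
which is the desired inequality.

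Applying this to $\mathcal{A} = \Aall{\mathcal{D}}{x}{y}$ and $\mathcal{A}' = \Aall{\mathcal{D}'}{x}{y}$ then yields $s_{\caos}((x,y);\mathcal{D}') \le s_{\caos}((x,y);\mathcal{D})$. There is no real obstacle here: the only subtlety is working with multisets rather than sets (repeated score values must be counted with multiplicity), but the order-statistic argument handles this without change since the characterization via size-$j$ sub-multisets applies verbatim. The content of the lemma is therefore entirely combinatorial and reduces to the observation that enlarging a multiset can only push its small order statistics downward, which is exactly the mechanism that allows additional reference examples to help, but never hurt, the $\minsum{k}$ aggregation.
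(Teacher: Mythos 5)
Your proof is correct and follows essentially the same route as the paper's: reduce to the multiset inclusion $\Aall{\mathcal{D}}{x}{y}\subseteq\Aall{\mathcal{D}'}{x}{y}$, observe that the first $k$ order statistics can only decrease when the multiset grows, and sum. The only difference is that you justify the pointwise order-statistic inequality $a'_{(j)}\le a_{(j)}$ via a variational characterization, whereas the paper states it as an immediate fact.
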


Lemma~5.2 (proof in App.~\ref{appendix:monotonicity_proof}) enables a comparison of the \gls{caos} calibration scores with full \gls{caos}.
\begin{lemma}[Comparison of \gls{caos} and full \gls{caos} scores]
\label{lem:caos-comparison}
Given a candidate label $y\in\mathcal{Y}$ and the augmented dataset
$\Daug=\Dn\cup\{(X_{n+1},y)\}$,
the test scores are equivalent
\begin{align}
\label{eqn:test_time_score_equivalence}
\sfulltest = \scaostest&,\\
\text{(Test score equivalence)}& \nonumber 
\end{align}
while for the calibration scores indexed by $i=1,\ldots,n$,
\begin{align}
\label{eqn:calib-dom}
\sfulli \;\le\; \scaosi,& \\
\text{(Calibration score dominance)}.& \nonumber
\end{align}
\end{lemma}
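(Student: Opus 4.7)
The plan is to verify both claims by unpacking the multiset definitions, with the inequality then following immediately from the monotonicity Lemma~\ref{lem:caos-monotonicity}. Both halves hinge on a single bookkeeping identity: the augmented dataset $\Daug$ restricted to reference examples other than $(X_i,Y_i)$ equals $\Dloo \cup \{(X_{n+1},y)\}$, and symmetrically, restricted to references other than $(X_{n+1},y)$ equals $\Dn$.

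For the test-score equivalence, I would expand $\alltestscoresfull$ and partition the multiset according to whether the reference example lies in $\Dn$ or equals the newly added $(X_{n+1},y)$. This yields the multiset decomposition $\alltestscoresfull = \allosscorestest \cup \{\spi{n+1}{X_{n+1}}{y}\}$, where $\spi{n+1}{X_{n+1}}{y}$ is the self-score corresponding to the augmented example. Because the definition of $\sfulltest$ subtracts precisely this self-score, the surviving multiset is $\allosscorestest$, so applying $\minsum{k}$ to both sides gives $\sfulltest = \scaostest$.

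For the calibration-score dominance, the analogous expansion produces $\allosscoresfull = \{s_{\pi_j}(X_i,Y_i) : (X_j,Y_j)\in\Dn\} \cup \{\spi{n+1}{X_i}{Y_i}\}$, and subtracting the self-score $\spii$ removes exactly the $j=i$ contribution. The remaining multiset is the pool of one-shot scores for the target $(X_i,Y_i)$ induced by the reference set $\Dloo \cup \{(X_{n+1},y)\}$, so $\sfulli = \scaos{X_i}{Y_i}{\Dloo \cup \{(X_{n+1},y)\}}$. Since $\Dloo \subseteq \Dloo \cup \{(X_{n+1},y)\}$ and $|\Dloo| = n-1 \ge k$ (by the standing assumption $k < n$), Lemma~\ref{lem:caos-monotonicity} yields $\sfulli \le \scaosi$.

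The main point of care, rather than difficulty, is to ensure that multiset subtraction at the score level ($\allosscoresfull - \{\spii\}$) agrees with set subtraction at the dataset level ($\Daug \setminus \{(X_i,Y_i)\}$). This agreement holds because the self-score $\spii$ is the unique contribution to $\allosscoresfull$ from using $(X_i,Y_i)$ itself as the reference, so removing this one numerical copy under multiset difference is equivalent to dropping $(X_i,Y_i)$ from the reference pool. Once this identification is in place, the lemma is essentially a direct corollary of monotonicity, with the deliberate self-score subtraction in the definition of $\tilde{s}_{\caos}$ performing exactly the bookkeeping role it was designed for.
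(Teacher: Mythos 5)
Your proof is correct and takes essentially the same route as the paper's: both unpack the multiset definitions to show that subtracting the self-score from $\allosscoresfull$ (resp.\ $\alltestscoresfull$) leaves exactly the pool of scores induced by the reference set $\Dloo \cup \{(X_{n+1},y)\}$ (resp.\ $\Dn$), and both then invoke Lemma~\ref{lem:caos-monotonicity} with the inclusion $\Dloo \subseteq \Dloo \cup \{(X_{n+1},y)\}$ to obtain the calibration inequality. Your explicit check that $|\Dloo| = n-1 \ge k$ and your remark on the agreement between multiset subtraction of score values and set removal of the reference example (which holds purely at the level of multiplicities, even under ties) are sound bookkeeping that the paper states more tersely.
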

\begin{proof}
Both statements follow directly from the definitions of the \gls{caos} and full \gls{caos}
scores. For the test point $(X_{n+1},y)$, the full \gls{caos} construction removes the
self-score, and the remaining reference set coincides with $D_n$, yielding the test score
equivalence. For calibration points, removing the self-score $i$ from the augmented dataset
leaves a reference set that strictly contains $D_n^{-i}$, so the inequality in
\Cref{eqn:calib-dom} follows from the monotonicity of the $\Sigma_k^{\min}$ aggregation
(Lemma~\ref{lem:caos-monotonicity}). Complete derivations for both cases are provided in
Appendix~\ref{appendix:tt_equivalence}.
\end{proof}

Via a sequence of lemmas, we have established that the nonconformity scores of \gls{caos} and full \gls{caos} match on test inputs, while the \gls{caos} scores dominate full \gls{caos} on calibration inputs. As a result, full \gls{caos} calibration leads to potentially smaller thresholds, which means that the resulting prediction sets are tighter than the \gls{caos} prediction sets. We formalize this in the next corollary.

\begin{corollary}[Set inclusion]
\label{cor:caos-set-inclusion}
Let $\Cfull(X_{n+1})$ be the full conformal \gls{caos} set
defined in \Cref{eqn:caos_full_set}, and let $\Ccaos(X_{n+1})$ be the
\gls{caos} set defined in \Cref{eqn:caos_set}.
Then for any test input $X_{n+1}$,
\begin{align}
\hat{\mathcal{C}}_{\full}(X_{n+1}) \subseteq \Ccaos(X_{n+1}).
\end{align}
\end{corollary}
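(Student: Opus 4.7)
The plan is to establish the set inclusion by a direct pointwise argument that chains the two parts of Lemma~\ref{lem:caos-comparison}. I would fix an arbitrary $y \in \Cfull(X_{n+1})$ and show that $y \in \Ccaos(X_{n+1})$, which is exactly the claimed inclusion. The two ingredients are (i) the test-score equivalence $\sfulltest = \scaostest$ and (ii) the per-index calibration dominance $\sfulli \le \scaosi$ for every $i = 1, \ldots, n$. The latter, combined with coordinatewise monotonicity of the empirical quantile, will yield the threshold comparison $\qfull \le \qcaos$, from which the set inclusion follows immediately.

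First, I would record the elementary monotonicity property: for any two multisets $\{a_i\}_{i=1}^n$ and $\{b_i\}_{i=1}^n$ with $a_i \le b_i$ for every $i$, the sorted order statistics satisfy $a_{(j)} \le b_{(j)}$ for every $j$, and hence $\Qtl{\{a_i\}}{\tau} \le \Qtl{\{b_i\}}{\tau}$ at any level $\tau$. Applying this with $a_i = \Sfulli$, $b_i = \Scaosi$, and $\tau = (1-\alpha)(1+1/n)$, the calibration dominance from Lemma~\ref{lem:caos-comparison} directly implies $\qfull \le \qcaos$.

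Second, I would close the argument with the test-score equivalence. If $y \in \Cfull(X_{n+1})$, then by definition $\sfulltest \le \qfull$. Combining the equivalence $\scaostest = \sfulltest$ with the threshold comparison above gives $\scaostest \le \qfull \le \qcaos$, so $y \in \Ccaos(X_{n+1})$ as well.

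I do not anticipate any real obstacle: the corollary is essentially a composition of Lemma~\ref{lem:caos-comparison} with the coordinatewise monotonicity of the empirical quantile in its sample. The only technical point to verify is that the $\QuantileOp$ used in \Cref{eqn:caos_calib} and \Cref{eqn:caos_full_calib} is the standard order-statistic-based empirical quantile, so monotonicity holds pointwise in the sample rather than only in a distributional sense; this is the convention used throughout conformal prediction, so no extra assumptions are needed.
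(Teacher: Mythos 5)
Your proposal is correct and follows essentially the same route as the paper's own proof: fix $y \in \Cfull(X_{n+1})$, use the per-index calibration dominance $\sfulli \le \scaosi$ together with the order-statistic monotonicity of $\Qtl{\cdot}{(1-\alpha)(1+1/n)}$ to conclude $\qfull \le \qcaos$, and then chain with the test-score equivalence $\sfulltest = \scaostest$ to get $\scaostest \le \qcaos$. The paper states the quantile monotonicity more tersely as ``monotonic in its first argument,'' but you spell out the coordinatewise order-statistic inequality that justifies it, which is a harmless and slightly more explicit version of the same step.
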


\begin{proof}
Let $y\in \Cfull(X_{n+1})$. Using Lemma~\ref{lem:caos-comparison}, denote $s:= \sfulltest = \scaostest$. By definition of the prediction set (\Cref{eqn:caos_full_set}), $s \le \qfull$.
By Lemma~\ref{lem:caos-comparison}, $\sfulli \le \scaosi$ for all calibration indices. These calibration scores are used to establish $\qcaos$ and $\qfull$ via \Cref{eqn:caos_calib,eqn:caos_full_calib}. Since $\Qtl{\cdot}{(1-\alpha)(1 + 1/n)}$ is monotonic in its first argument, the resulting thresholds obey $\qfull \le \qcaos$, which means that $s \le \qcaos$.
Therefore, by \Cref{eqn:caos_set}, $y\in \Ccaos(X_{n+1})$.
\end{proof}

\begin{proof}[Proof of \Cref{thm:caos-coverage}]
We established 
$\hat{\mathcal{C}}_{\full}(X_{n+1}) \subseteq \Ccaos(X_{n+1})$ for any $X_{n+1}$ via Cor.~\ref{cor:caos-set-inclusion},
so it suffices to show that $\hat{\mathcal{C}}_{\full}$ has $1-\alpha$ marginal coverage. 
This follows from standard full conformal arguments, using the symmetry of 
$\sfull{x}{y}{\Daug}$ (Lemma~\ref{lem:caos-symmetry}) and the assumed exchangeability 
of samples from  $\Daug$.
\end{proof}

\parhead{Remarks.}
The \gls{caos} coverage guarantee applies to nonconformity scores that satisfy two
structural properties: (i) symmetry with respect to permutations of the reference
data, and (ii) monotonicity with respect to the inclusion of additional
examples. Score functions that violate these conditions, such as scores involving fine-tuning or hyperparameter selection, fall outside the scope of our theory.

Although \gls{caos} deliberately breaks score exchangeability by
excluding the test point from the reference data, this does not compromise
coverage. The monotonicity property implies that \gls{caos} prediction sets contain
those produced by full conformal prediction, rendering score exchangeability
unnecessary for marginal coverage guarantees.

\section{Experiments}
\label{sec:experiments}
We evaluate \gls{caos} as a method for producing calibrated prediction sets for one-shot prediction in comparison to several \gls{scos} baselines on
(i) exemplar-based facial landmarking with pretrained vision models, and
(ii) \gls{raft} with \glspl{llm} \citep{alex2021raft}.

\begin{table*}[t]
\caption{Empirical coverage $\widehat{\mathrm{Cov}}$ and prediction set size averaged over landmarks (mean $\pm$ SEM) for exemplar-based facial landmarking, averaged over 478 landmarking tasks, at target miscoverage levels $\alpha \in \{0.05, 0.1, 0.2\}$.}
\label{tab:main_results_by_alpha_all_landmarks}
\begin{tabular}{lcccccc}
\toprule
 & \multicolumn{2}{c}{$\alpha=0.05$} & \multicolumn{2}{c}{$\alpha=0.1$} & \multicolumn{2}{c}{$\alpha=0.2$} \\
Method & $\widehat{\mathrm{Cov}}$ ($\ge$ 0.95) & $\widehat{\mathrm{Size}}$ & $\widehat{\mathrm{Cov}}$ ($\ge$ 0.9) & $\widehat{\mathrm{Size}}$ & $\widehat{\mathrm{Cov}}$ ($\ge$ 0.8) & $\widehat{\mathrm{Size}}$\\
\midrule
\gls{scos} & 0.976 $\pm$ 0.001 & 36.07 $\pm$ 0.94 & 0.930 $\pm$ 0.001 & 21.04 $\pm$ 0.50 & 0.842 $\pm$ 0.001 & 13.41 $\pm$ 0.35 \\
\gls{scos} Best$^\dagger$ & 0.952 $\pm$ 0.002 & 20.49 $\pm$ 0.59 & 0.898 $\pm$ 0.002 & 12.17 $\pm$ 0.32 & 0.797 $\pm$ 0.003 & 7.12 $\pm$ 0.18 \\
\gls{caos} [ours] & 0.955 $\pm$ 0.001 & \textbf{16.14 $\pm$ 0.58} & 0.908 $\pm$ 0.002 & \textbf{9.27 $\pm$ 0.29} & 0.810 $\pm$ 0.003 & \textbf{5.29 $\pm$ 0.15} \\
\textcolor{gray}{\glsentryshort{scos} Oracle$^*$} & \textcolor{gray}{1.000 $\pm$ 0.000} & \textcolor{gray}{16.66 $\pm$ 0.58} & \textcolor{gray}{1.000 $\pm$ 0.000} & \textcolor{gray}{7.98 $\pm$ 0.27} & \textcolor{gray}{1.000 $\pm$ 0.000} & \textcolor{gray}{4.29 $\pm$ 0.13} \\
\bottomrule
\end{tabular}\\
\footnotesize $^\dagger$ No marginal coverage guarantee due to model selection on the calibration data; included only as a reference. \\ 
$^*$ Uses hindsight knowledge of the test label; included as a non-deployable reference. 
\end{table*}

\subsection{Exemplar-Based Facial Landmarking}

This application is motivated by an ongoing collaboration with pediatric surgeons, who manually annotate
patient images pre- and post-operatively to support medical screening and quantitative analysis of
surgical outcomes.
These analyses rely on task-specific facial landmarks that are not available in standard landmarking
datasets and must be defined and annotated by clinical experts.

Clinical annotations are expensive and time-consuming and large-scale annotation is rarely realistic.
With a few labeled examples, each annotated image can be used to induce a one-shot predictor.
Since relevance varies substantially across predictors, uncertainty-aware adaptive
aggregation is essential for reliable deployment.
We evaluate \gls{caos} using publicly available face data to enable controlled and reproducible evaluation.

\parhead{Facial Landmarking Dataset.}
We evaluate exemplar-based facial landmarking using aligned face images from the CelebA dataset. For reproducibility, the exact hash of the dataset version we used along with other relevant implementation details is provided in Appendix~\ref{sec:celeb_data}. 

Each image is divided into $K=168$ patches of $16\times16$ pixels. 
The label $Y$ for image $X$ indicates in which of the $K$ patches a landmark is located in, i.e. we treat facial landmarking as a classification problem with $168$ classes. Other than padding images on the lower right to the next multiple of the patch size, we employ no further preprocessing.

Ground truth landmark locations are obtained via the MediaPipe Face Landmarker with default settings\footnote{\texttt{num\_faces=1}, \texttt{\seqsplit{min\_face\_det\_confidence=0.5}}}, which places 478 landmarks in each image. This auto-annotation scheme provides a scalable way to generate labeled data for 478 landmarking tasks of varying difficulty. While we acknowledge that MediaPipe Face Landmarker could make mistakes, it nevertheless produces a meaningful benchmark for the evaluation of one-shot landmarking via patch similarity.  

We randomly select a subset of images, excluding any images where MediaPipe Landmarker fails to place any of the 478 landmarks, to create separate splits $\Dn$ and $\Dtest$ with $|\Dn| = |\Dtest| = 100$. All conformal approaches are evaluated on the same held-out set $\Dtest$. For all split conformal approaches we further split $\Dn$ into a reference set and a calibration set with $|\Dref| = |\Dcalib| = 50$, $\Dref \cup \Dcalib = \Dn$.

\parhead{One-Shot Landmark Prediction via Patch Similarity.}
Given an example image $X_i$ with a landmark located in patch $Y_i$, the one-shot task is to predict where that same landmark is located in a test image. We instantiate one-shot landmarking using the patch-similarity approach described in
Example~1.
The embeddings $e_x(y)$ in \Cref{eqn:patch_similarity}, come from a pretrained frozen DINOv3 model (ViT‑B/16 pretrained on LVD‑1689M) which produces a patch token for each image patch. The register and CLS tokens are dropped.

\parhead{Approaches for Conformal One-Shot Landmarking.}
 We evaluate \gls{caos} in comparison to several split conformal baselines in terms of empirical coverage and prediction set size (\Cref{eqn:cov_eval,eqn:size_eval} in Appendix~\ref{appendix:eval}). 
Empirical coverage should meet or exceed the target level $1-\alpha$, while smaller prediction sets, corresponding to lower predictive uncertainty, are preferred. 
For a fair comparison, all methods use the same one-shot prediction tasks, backbone models, and data splits, differing only in how prediction sets are constructed and calibrated.
\begin{itemize}[leftmargin=*,noitemsep,topsep=0.1em]
\item \textbf{\gls{scos}:} This baseline follows \Cref{alg:split-one-shot} to calibrate $|\Dref|$ one-shot predictors on calibration data $\Dcalib$. Performance is reported on average over one-shot predictors.
\item \textbf{\gls{scos} Best:} 
This baseline considers the same pool of one-shot predictors and calibration procedure but then uses the calibration set to select the predictor that produces the smallest prediction sets (on average over calibration examples). This selection step, in theory breaks the coverage guarantees associated with split conformal approaches.
\item \textbf{\gls{scos} Oracle:}
This oracle variant is also based on \Cref{alg:split-one-shot}, but leverages access to the test set and the ground-truth label to adaptively select, for each example, the smallest single-source prediction set that contains the true label. 
While this approach attains perfect coverage, it is not deployable in practice and is included solely as a reference.
\item \textbf{\gls{caos} [ours]:} \gls{caos} uses $\Dn$ both as the pool of reference examples and as calibration data according to \Cref{alg:caos}. 
\end{itemize}
\parhead{Choice of $k$:}
In principle, the aggregation parameter $k$ could be selected via cross-validation.
However, in the one-shot setting considered here, labeled data are extremely scarce, and cross-validation would require withholding examples that could otherwise be used for reference or calibration.
We therefore fix $k=3$ for all experiments and empirically verify that \gls{caos} performance is robust to the choice of $k$ across a wide range of values (App.~\ref{app:k_ablation}).

\begin{figure}[t]
\centering
\begin{tabular}{@{}c@{\hspace{0.2em}}c@{}}
  \raisebox{-0.42\height}{\rotatebox{90}{\small Prediction set size per landmark}}
  &
  \begin{tabular}{@{}c@{}}
    \includegraphics[width=0.95\linewidth]{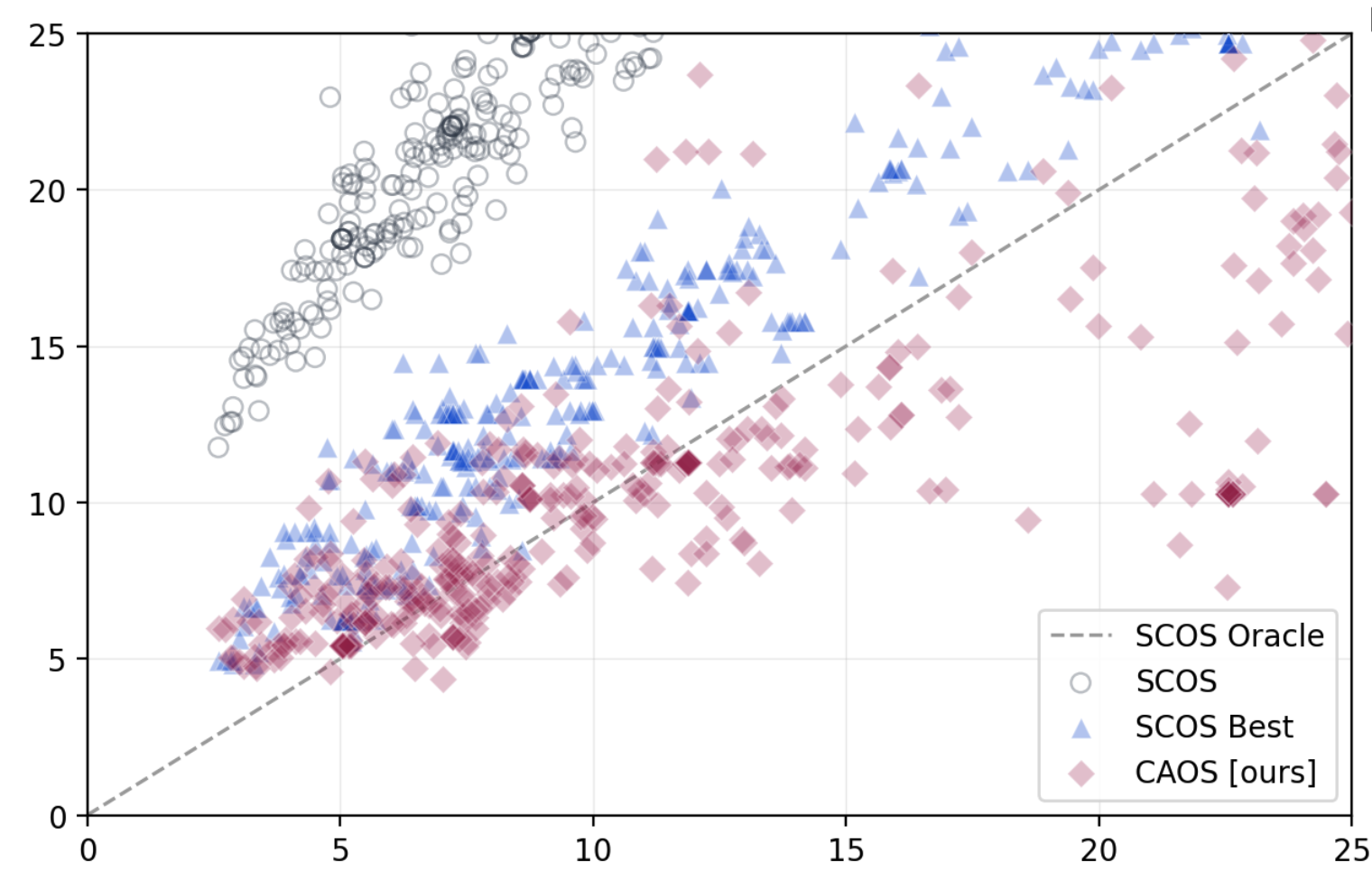} \\[-0.4em]
    {\small Prediction set size per landmark for \gls{scos} oracle}
  \end{tabular}
\end{tabular}

\caption{
Prediction set size per landmark for exemplar-based facial landmarking.
Each point corresponds to a landmark.
\gls{caos} yields smaller prediction sets than \gls{scos} and \gls{scos} Best,
approaching oracle efficiency.
}
\label{fig:landmark_scatter}
\end{figure}

\begin{figure}[t]
\centering
\begin{minipage}[c]{0.05\linewidth}
\centering
\rotatebox{90}{\small SCOS}\\[3.5em]
\rotatebox{90}{\small CAOS [ours]}
\end{minipage}
\hfill
\begin{minipage}[c]{0.88\linewidth}
\centering
\includegraphics[width=\linewidth]{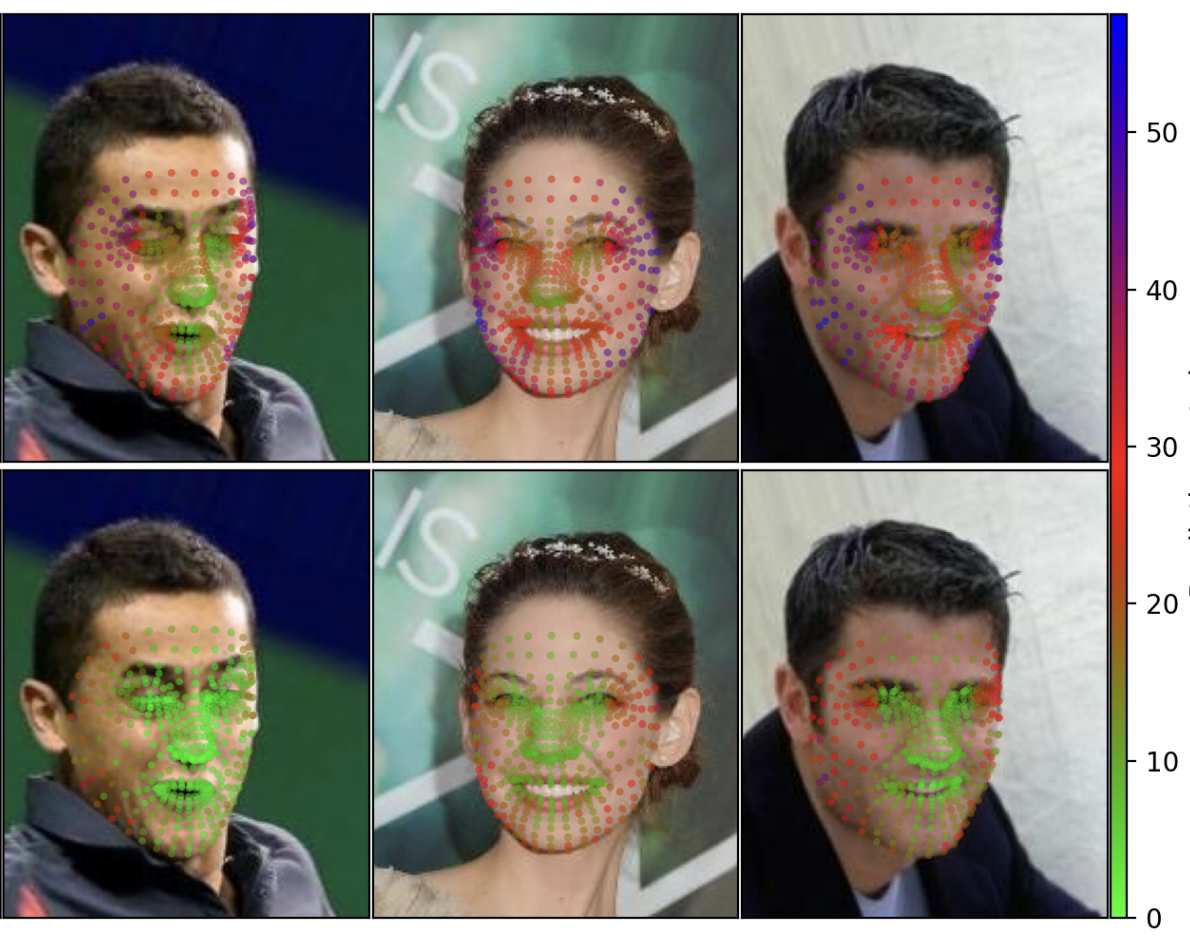}
\end{minipage}
\hfill
\begin{minipage}[c]{0.05\linewidth}
\centering
\rotatebox{90}{\small Size of conformal prediction sets}
\end{minipage}

\caption{Conformal prediction set sizes for one-shot facial landmarking for \gls{scos} (top) and \gls{caos} (bottom) on three test images. Landmarks are colored by prediction set size. Smaller sets (green) correspond to lower predictive uncertainty and are preferable.}
\label{fig:landmark_vis_grid}
\end{figure}

\parhead{Results}
\Cref{tab:main_results_by_alpha_all_landmarks} summarizes results in terms of empirical coverage and average set size at different target miscoverage rates $\alpha \in [0.05, 0.1, 0.2]$. We report mean and standard error around the mean averaged over the held-out test data and the 478 landmarking tasks. \Gls{caos} consistently meets or exceeds the target coverage rate, while significantly improving over the \gls{scos} prediction sets in terms of size.

The scatter plot in~\Cref{fig:landmark_scatter} further breaks down these results at $\alpha=0.05$ by individual landmarks which vary significantly by difficulty; on some landmarks even the oracle that selects the smallest prediction set that contains the true label achieves only prediction set sizes that exceed 20 patches on average (averaged over held-out set). The x-axis is the prediction set size achieved by the \gls{scos} oracle, while the values on the y-axis give the corresponding set size achieved by the other methods, \gls{scos} (gray circles) \gls{scos} Best (blue triangles) and \gls{caos} (red diamonds).

\Cref{fig:landmark_vis_grid} visualizes the one-shot prediction results for 478 facial landmarks in 3 test images. We color each ground truth landmark by the set size achieved by \gls{scos} (top) and \gls{caos} (bottom) (the smaller the better, values below 10 are marked in green). We can see qualitatively that \gls{caos} achieves significantly smaller prediction set sizes.

To isolate the source of these gains, App.~\ref{app:split_ablation} presents a diagnostic ablation showing that aggregation alone yields limited improvements when data are split, and that reusing labeled examples for both calibration and reference is the main driver for \gls{caos}’s efficiency. Thm.~\ref{thm:caos-coverage} is essential to establish that such data reuse is valid.

\subsection{Real-World Few-Shot Annotated Tasks with LLMs}

We evaluate \gls{caos} on the \gls{raft} benchmark~\citep{alex2021raft}, a collection of real-world few-shot text classification tasks designed to reflect practical annotation workflows.
Each task provides a small labeled dataset and exhibits substantial heterogeneity in label space size, class balance, and semantic complexity, making RAFT a challenging testbed for uncertainty quantification in low-data regimes.

\begin{figure}[t]
    \centering

    \begin{minipage}{0.99\linewidth}
        \centering
        \includegraphics[width=\linewidth]{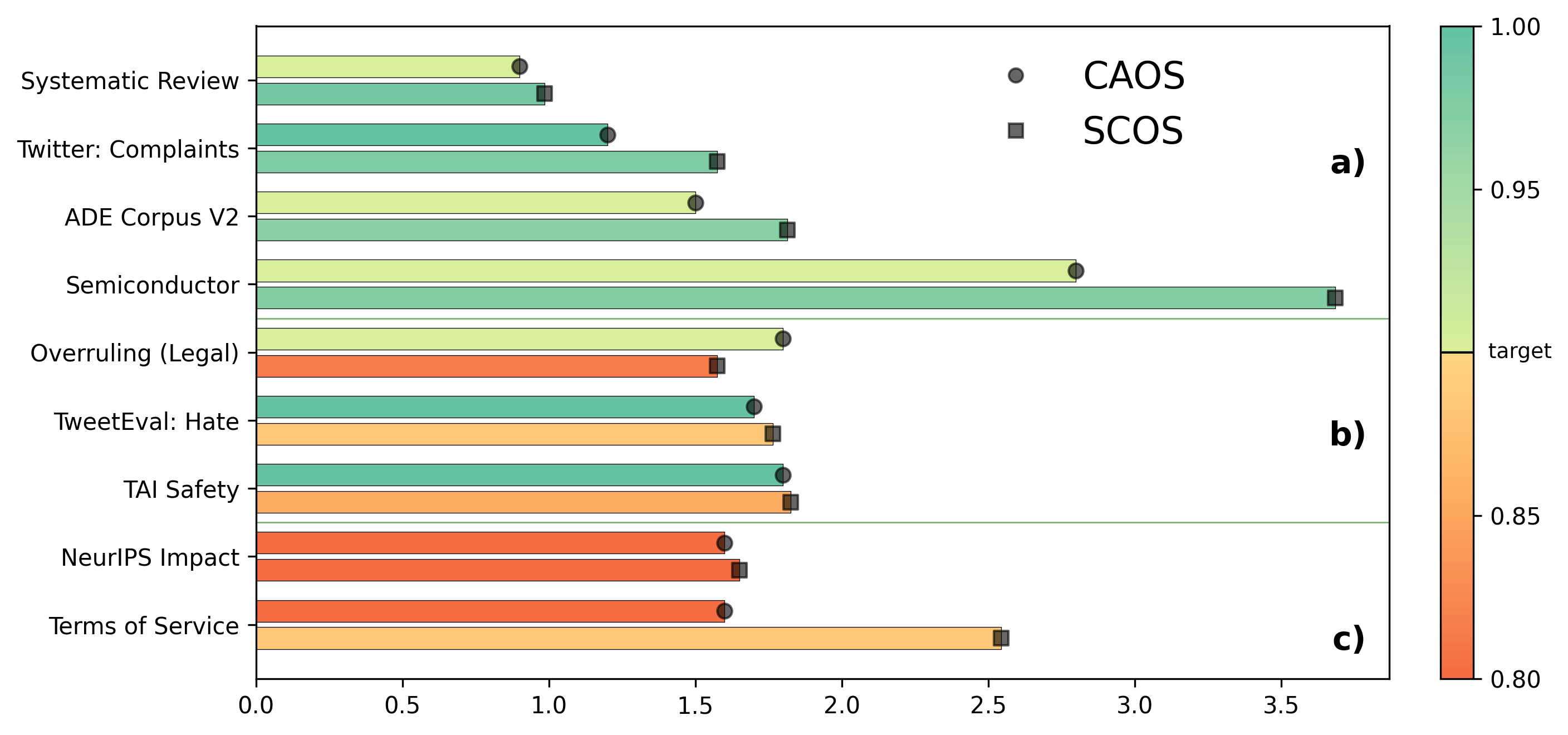}
    \end{minipage} \\
    \begin{minipage}{0.15\linewidth}
        \centering
        \centerline{\small Task}
    \end{minipage}
    \begin{minipage}{0.7\linewidth}
        \centering
        \centerline{\small Average Set Size}
    \end{minipage}
    \begin{minipage}{0.1\linewidth}
        \centering
        \centerline{\small Cov.}
    \end{minipage}
    \caption{Comparison of \gls{caos} (above) and \gls{scos} (below) in terms of average prediction set size (x-axis, the smaller the better) and empirical coverage (color) for one-shot prediction with Llama2-7B on tasks from the RAFT dataset at $\alpha=0.1$. \textbf{a)} For 4/9 tasks, both \gls{caos} and \gls{scos} hit target coverage. \textbf{b)} For 3/9 tasks only \gls{caos} hits target coverage. \textbf{c)} For 2/9 tasks both miss. \Gls{caos} consistently achieves smaller prediction sets (in 8/9 tasks).}
    \label{fig:setsize_llama2_alpha01}
\end{figure}

\parhead{One-shot Text Classification with LLMs.}
Each \gls{raft} task provides 50 labeled examples. Each example induces a one-shot predictor via \gls{icl} with a fixed prompt template (see \Cref{appendix:prompt}), as described in \Cref{sec:background}, Ex.~2. For each candidate label $y$ in the task-specific label space, we concatenate the label with the prompt and compute the negative log-likelihood (NLL) of $y$ under the model, and use the length-normalized NLL as the nonconformity score to avoid penalizing longer labels. While we use Llama2-7B in our experiments, this procedure applies to any language model that provides access to token-level logits, including API-based models, provided the label space is known.

We construct a dataset $\Dn$ of size 40 and a held-out test set $\Dtest$ of size 10, which is used exclusively to evaluate empirical coverage and prediction set size.
For split conformal baselines, which require disjoint reference and calibration data, $\Dn$ is further partitioned into equally sized reference and calibration sets $\mathcal{D}_{\text{ref}}$ and $\mathcal{D}_{\text{calib}}$.
All methods use identical data splits, prompts, and backbone model, differing only in how prediction sets are constructed and calibrated.

\parhead{Results.}
\Cref{fig:setsize_llama2_alpha01} summarizes the performance of \gls{caos} and \gls{scos} on one-shot text classification tasks from the RAFT benchmark using Llama2-7B at $\alpha=0.1$.
Across tasks, \gls{caos} consistently produces smaller prediction sets than \gls{scos}, achieving lower average set size in 8 out of 9 tasks.
When both methods satisfy the target coverage level (4/9 tasks), \gls{caos} maintains comparable coverage while yielding tighter prediction sets.
In more challenging regimes, where \gls{scos} fails to meet the target coverage, \gls{caos} retains coverage in 3 additional tasks.
Only in 2 tasks do both methods fail to achieve the desired coverage.\footnote{Each \gls{raft} task has an evaluation set of size 10, which makes empirical coverage estimates inherently noisy.
At a target coverage level of $0.9$, the probability of observing $8\le 10$ covered is about $26\%$ under a Binomial($10$,$0.9$) model. 
Accordingly, occasional misscoverage should be expected in this low-sample regime.}
These results indicate that \gls{caos} reduces predictive uncertainty and improves efficiency when labeled data is scarce.
\section{Conclusion}
\glsresetall
We introduced \gls{caos}, a conformal prediction framework designed for one-shot and in-context learning settings where labeled data are scarce and multiple reference-induced predictors naturally arise. \gls{caos} adaptively aggregates nonconformity scores across all available one-shot predictors while allowing every labeled example to participate in calibration.

Despite breaking classical exchangeability assumptions at the score level, we proved that \gls{caos} achieves finite-sample marginal coverage. Our analysis relies on a novel monotonicity-based argument that reduces the \gls{caos} procedure to full conformal prediction, thereby avoiding the slack terms that typically arise in cross-conformal or data-adaptive settings. This result shows that algorithmic departures from exchangeability need not preclude sharp coverage guarantees when carefully structured aggregation rules are used.

Empirically, we demonstrated that \gls{caos} substantially improves efficiency over split conformal baselines in both vision and language applications, yielding significantly smaller prediction sets while maintaining reliable coverage. These gains are especially pronounced in low-data regimes, where data splitting and coarse calibration quantiles severely limit the usefulness of standard conformal approaches.


\section*{Impact Statement}

This work develops a methodological contribution to uncertainty quantification in machine learning, with a focus on conformal prediction in one-shot and low-data regimes. The proposed method is intended to improve the statistical efficiency of prediction sets when labeled data are scarce, which is a common setting in scientific and medical applications.

However, as with all uncertainty quantification methods, the established guarantees apply only under the stated assumptions, and misuse or misinterpretation of prediction sets could lead to unwarranted confidence. The method does not introduce new data collection, deployment, or automation mechanisms, and we do not anticipate specific negative societal impacts beyond those generally associated with the application domains in which conformal prediction is used.

\bibliography{icml2026/caos_refs}
\bibliographystyle{icml2026}

\newpage
\appendix
\onecolumn

\section{Pseudocode}

\begin{algorithm}[ht]
\caption{Split Conformal One-Shot Prediction}
\label{alg:split-one-shot}
\begin{algorithmic}[1]
\STATE \textbf{Input:} Example $(X_i,Y_i)$ defining one-shot predictor $i$;
calibration set $\Dcalib$ of size $\ncalib$;
test point $X_{n+1}$; target coverage level $1-\alpha$; nonconformity score $\sos$.
\STATE Compute set of calibration scores via \Cref{eqn:split_scores}:\\
$\Scalibi \gets \Bigl\{ \spij : (X_j, Y_j) \in \Dcalib \Bigr\}$.
\STATE Compute threshold via \Cref{eqn:split_calib}:\\
$\qspliti \gets \Qtl{\Scalibi}{(1-\alpha)\bigl(1+1/\ncalib\bigr)}$.
\STATE \textbf{Return} prediction set via \Cref{eqn:split_set}:\\
$\Cspliti(X_{n+1}) \gets \Bigl\{y \in \mathcal{Y} :
\spitest\le \qspliti \Bigr\}$.
\end{algorithmic}
\end{algorithm}

\begin{algorithm}[ht]
\caption{\gls{caos} Prediction (repeat of \Cref{alg:caos})}
\begin{algorithmic}[1]
\STATE \textbf{Input:} Labeled data $\Dn=\{(X_i,Y_i)\}_{i=1}^n$;
test input $X_{n+1}$; target coverage level $1-\alpha$;
integer $k\ge 1$; one-shot nonconformity score $s_{\pi_i}$.
\STATE Compute aggregated calibration scores via \Cref{eqn:caos_loo}:\\
$\Scaosi \gets \scaosi$ for $i=1,\ldots,n$.
\STATE Compute threshold via \Cref{eqn:caos_calib}:\\
$\qcaos \gets \Qtl{\{\Scaosi\}_{i=1}^n}{(1-\alpha)(1+1/n)}$.
\STATE Compute test scores for all $y\in\mathcal{Y}$ via \Cref{eqn:caos_agg}:\\
    $S_{n+1}^{y}\gets \scaostest$.
\STATE \textbf{Return} calibrated \gls{caos} prediction set via \Cref{eqn:caos_set}:\\
$\Ccaos(X_{n+1}) \gets \{y\in\mathcal{Y}: S_{n+1}^{y}\le \qcaos\}$.
\end{algorithmic}
\end{algorithm}

\begin{algorithm}[ht]
\caption{Full \gls{caos} Prediction (Theoretical)}
\label{alg:full-caos}
\begin{algorithmic}[1]
\STATE \textbf{Input:} Labeled data $\Dn=\{(X_i,Y_i)\}_{i=1}^n$;
test input $X_{n+1}$; target coverage level $1-\alpha$;
integer $k\ge 1$; one-shot nonconformity score $\spi{\cdot}{\cdot}$.
\FOR{each $y \in \mathcal{Y}$}
    \STATE Compute full \gls{caos} calibration scores via \Cref{eqn:caos_full_scores}:\\
    $\Sfulli \gets \sfulli$ for $i=1,\ldots,n$.
    \STATE Compute test score via \Cref{eqn:caos_agg,eqn:test_time_score_equivalence}:\\
    $\Stest \gets \sfulltest$
    \STATE Compute full conformal threshold via \Cref{eqn:caos_full_calib}:\\
    $\qfull \gets \Qtl{\{\Sfulli\}_{i=1}^n}{(1-\alpha)(1+1/n)}$.
\ENDFOR
\STATE \textbf{Return} full \gls{caos} prediction set via \Cref{eqn:caos_full_set}:\\
$\hat{\mathcal{C}}_{\full}(X_{n+1}) \gets \bigl\{y\in\mathcal{Y} : S_{n+1}^{y} \le \hat q_{\full}^{\,y}\bigr\}$.
\end{algorithmic}
\end{algorithm}
\clearpage
\section{Detailed Proofs for CAOS Theory}

\subsection{Proof of Symmetry of the full CAOS score}
\label{appendix:symmetry_proof}
\begin{proof}[Proof of Lemma~5.1]
Fix $(x,y)$ and a dataset $\mathcal{D}$.
Let $\mathcal{A}_{\mathcal{D}}(x,y)$ denote the multiset of one-shot
nonconformity scores obtained by using each element of $\mathcal{D}$ as a
reference example for the target $(x,y)$.
Permuting the elements of $\mathcal{D}$ does not change this multiset, hence
\(
\mathcal{A}_{\mathcal{D}}(x,y)
=
\mathcal{A}_{\mathcal{D}_\sigma}(x,y).
\)

The operator $\minsum{k}$ depends only on the multiset of its inputs and is
therefore symmetric.
Moreover, the subtracted term in \Cref{eqn:caos_full_scores} corresponds to the
one-shot score induced by the target example itself and is unaffected by
permutations of the remaining elements of $\mathcal{D}$.
It follows that
\(
\tilde{s}_{\caos}((x,y);\mathcal{D})
=
\tilde{s}_{\caos}((x,y);\mathcal{D}_\sigma).
\)
\end{proof}

\subsection{Proof of the Monotonicity of the CAOS score}
\label{appendix:monotonicity_proof}
\begin{proof}[Proof of Lemma~\ref{lem:caos-monotonicity}]
Let $\mathcal{A}=\mathcal{A}_{\mathcal{D}}(x,y)$ and
$\mathcal{A}'=\mathcal{A}_{\mathcal{D}'}(x,y)$.
Since $\mathcal{D}\subseteq \mathcal{D}'$, the multiset $\mathcal{A}'$ contains all scores in $\mathcal{A}$.

Let $a_{(1)}\le \cdots \le a_{(|\mathcal{A}|)}$ denote the ordered values of
$\mathcal{A}$ and let $a'_{(1)}\le \cdots \le a'_{(|\mathcal{A}'|)}$ denote the
ordered values of $\mathcal{A}'$.
For each $j\le k$ we have $a'_{(j)}\le a_{(j)}$, because adding elements to a
multiset cannot increase any of the first $k$ order statistics.
Therefore,
\[
\minsum{k}(\mathcal{A}')
=\sum_{j=1}^k a'_{(j)}
\;\le\;
\sum_{j=1}^k a_{(j)}
=\minsum{k}(\mathcal{A}),
\]
yielding
\(
s_{\caos}((x,y);\mathcal{D}')
\le s_{\caos}((x,y);\mathcal{D}),
\)
as claimed.
\end{proof}

\subsection{Score comparison between CAOS and full CAOS}
\label{appendix:tt_equivalence}

Fix a candidate label $y\in\mathcal{Y}$ and the augmented dataset
$\Daug=\Dn\cup\{(X_{n+1},y)\}$.
Our goal is to show that the test scores are equivalent
\begin{align}
\sfulltest = \scaostest,
\end{align}
while for the calibration scores indexed by $i=1,\ldots,n$,
\begin{align}
\sfulli \;\le\; \scaosi.
\end{align}

\begin{proof}[Proof of Lemma~\ref{lem:caos-comparison}]
Both statement follow from the definitions of the scores and the definition of multiset substraction provided in \Cref{eqn:diff}, App.~\ref{appendix:notation}.

For the test inputs we have 
\begin{align}
\tilde{s}_{\caos}\bigl((X_{n+1},y);\mathcal{D}_{n+1}^y\bigr)
&= \minsum{k} \bigl(\alltestscoresfull - \{s_{\pi_{n+1}^y}(X_{n+1},y)\} \bigr)\nonumber \\
&= \minsum{k} \bigl(\allosscorestest \bigr) \nonumber \\&= s_{\caos}\bigl((X_{n+1},y);\mathcal{D}_n\bigr).
\end{align}
Similarly, for the calibration scores:
\begin{align}
\sfulli &= \minsum{k}\bigl(\allosscoresfull - \{\spii\}\bigr)\nonumber\\
&= \minsum{k} \bigl( \mathcal{A}_{\Dloo \cup \{(X_{n+1}, y) \}}(X_i, Y_i)\bigr) \nonumber \\
     &= \scaos{X_i}{Y_i}{\Dloo \cup \{(X_{n+1}, y)\}} \nonumber \\
     &\leq  \scaosi.
\end{align}
The last inequality follows from Lemma~\ref{lem:caos-monotonicity} (Monotonicity Lemma).
\end{proof}
\clearpage
\section{Additional Implementation Details}
\subsection{Facial Landmarking Data}
\label{sec:celeb_data}
We use the Hugging Face dataset nielsr/CelebA-faces (aligned face crops of CelebA), train split (202,599 images; revision db26feecbaed40316ee4c80ec0e72211a2f4afd6), with original resolution 178×218 and no additional resizing.

\subsection{Evaluation Metrics}
\label{appendix:eval}
All conformal prediction methods are evaluated in terms of empirical coverage and average prediction set size.

\paragraph{Coverage.}
Marginal coverage reports the fraction of prediction sets that contain the true label for labeled examples in the test set $\Dtest$.
We estimate empirical coverage as
\begin{align}
\label{eqn:cov_eval}
\widehat{\mathrm{Cov}}
= \frac{1}{|\Dtest|} \sum_{(X,Y)\in\Dtest} \mathbbm{1}\{Y \in \hat{\mathcal{C}}(X)\},
\end{align}
reported either averaged per-task, or averaged jointly over all tasks. Even though \gls{scos} and \gls{caos} enjoy finite-sample marginal coverage guarantees, it can still occur that individual results lie below the target coverage of $1-\alpha$. This is particularly prone to happen for small test sets $\Dtest$.

\paragraph{Set size.}
We report the average prediction set size
\begin{align}
\label{eqn:size_eval}
\widehat{\mathrm{Size}}
= \frac{1}{|\Dtest|} \sum_{(X,Y)\in\Dtest} |\hat{\mathcal{C}}(X)|.
\end{align}
For the same marginal coverage, smaller sets are preferable, as they correspond to lower predictive uncertainty.

\subsection{LLM Prompt Template.}
\label{appendix:prompt}

\begin{tcolorbox}[title=Prompt template, colback=gray!5, colframe=gray!50, boxrule=0.5pt]
\begin{verbatim}
This is a text classification task. 
You will get one example, then predict the most appropriate label.
Return only the label.

Example:
Text: <source_text>
Label: <source_label>

Text: <target_text>
Label:
\end{verbatim}
\end{tcolorbox}
\section{Robustness to the Aggregation Parameter $k$}
\label{app:k_ablation}

\paragraph{Setup.}
All landmarking experiments in the main paper use a fixed aggregation parameter $k=3$.
To assess sensitivity to this choice, we evaluate \gls{caos} across a range of values $k \in \{1,\dots,10\}$ on exemplar-based facial landmarking.
For each landmarking task, we report prediction set size averaged over the evaluation set, using the same calibration and evaluation splits as in the main experiments.

\paragraph{Average robustness.}
Figure~\ref{fig:k_ablation_spaghetti} (left) shows the average prediction set size across all landmarking tasks as a function of $k$.
Across a wide range of aggregation levels, average performance remains essentially constant, indicating that \gls{caos} does not require careful tuning of $k$ to achieve reliable efficiency.

\paragraph{Task-level heterogeneity.}
Figure~\ref{fig:k_ablation_spaghetti} (right) shows prediction set size for individual landmarking tasks.
We observe heterogeneous behavior: for many tasks, averaged over test examples, performance is stable w.r.t. $k$, but for some landmarks, increasing $k$ leads to improved efficiency, while for others it results in larger prediction sets.
These opposing trends largely cancel out at the population level, explaining the observed stability of the average performance.

\paragraph{Stratification by task difficulty.}
To further verify that robustness to $k$ holds across different regimes, we stratify landmarks by oracle prediction set size into easy, medium, and hard tasks.
As shown in Figure~\ref{fig:k_ablation_difficulty}, \gls{caos} exhibits stable performance with respect to $k$ within each difficulty regime.
This confirms that robustness to the aggregation parameter is not driven solely by easy landmarking tasks.

\paragraph{Conclusion.}
Together, these results demonstrate that \gls{caos} is robust to the choice of aggregation parameter $k$.
While the optimal value of $k$ may vary across individual tasks, overall performance remains stable without tuning, supporting the use of a fixed, task-agnostic aggregation level in practice.

\begin{figure}[t]
  \centering
  \includegraphics[width=\linewidth]{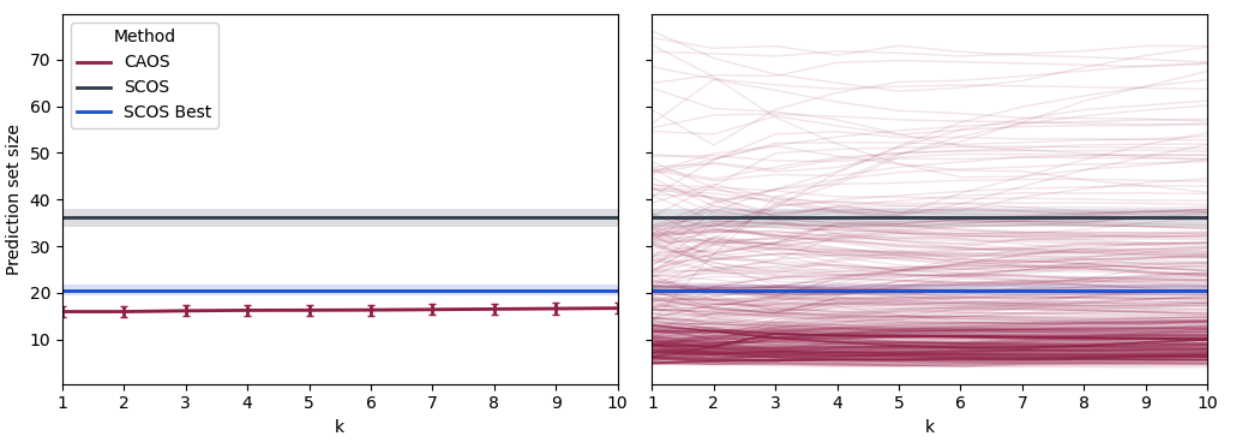}
  \caption{
  Sensitivity of \gls{caos} to the aggregation parameter $k$ on exemplar-based facial landmarking.
  \textbf{Left:} Average prediction set size across all landmarking tasks as a function of $k$.
  \textbf{Right:} Per-landmark prediction set size for \gls{caos} (spaghetti plot).
  While individual landmarks exhibit heterogeneous responses to increasing $k$, these effects balance out in aggregate, resulting in stable average performance across a wide range of $k$.
  }
  \label{fig:k_ablation_spaghetti}
\end{figure}

\begin{figure}[t]
  \centering
  \includegraphics[width=\linewidth]{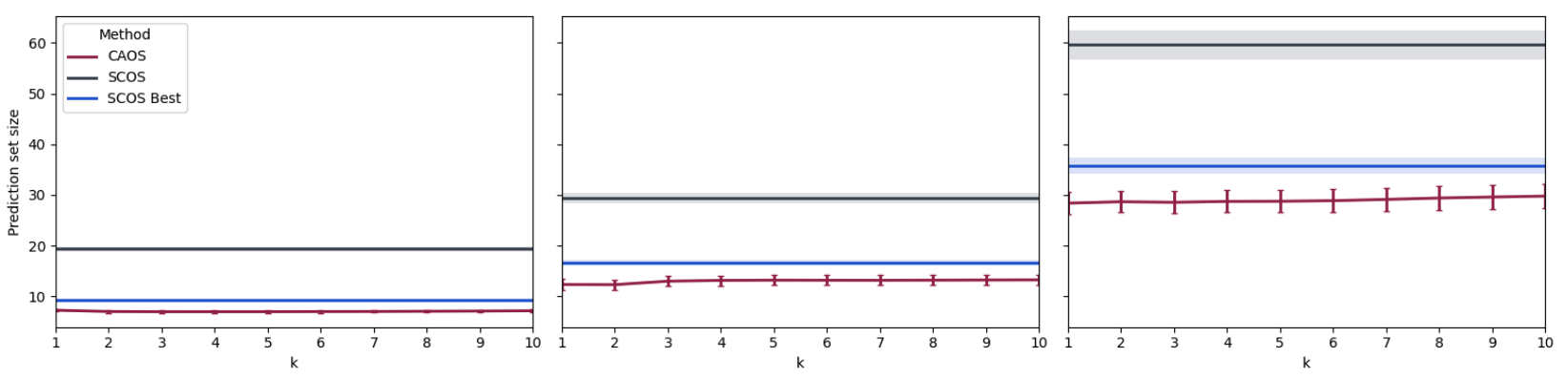}
  \caption{
  \gls{caos} sensitivity to the aggregation parameter $k$, stratified by landmarking task difficulty.
  Landmarks are grouped into easy, medium, and hard regimes based on oracle prediction set size.
  Within each difficulty regime, average prediction set size remains stable across $k$, indicating that robustness to $k$ is not driven by a particular subset of tasks.
  }
  \label{fig:k_ablation_difficulty}
\end{figure}
\section{Data-Reuse Ablation Study}
\label{app:split_ablation}
A key feature of \gls{caos} is its ability to reuse the same labeled examples for both calibration and reference, while still maintaining finite-sample marginal coverage at level $1-\alpha$. To disentangle the contributions of aggregation and data reuse, we evaluate several \gls{caos} variants that selectively restrict how labeled data are used for calibration and reference. Specifically, we consider variants that follow the split conformal data splitting protocol of \gls{scos} and split the data into disjoint calibration and reference pools. To better disentangle data scarcity in the reference pool vs. the calibration data, we also study variants that reuse data for only one of these roles. Table~\ref{tab:caos_variant_comparison} reports average prediction set size and empirical coverage for each variant.
\begin{table}[t]
\centering
\caption{
Prediction set size and empirical coverage for CAOS variants and baselines on exemplar-based facial landmarking at $\alpha = 0.05$. Results are averaged over landmarks.
Here, $\mathcal{D}_n$ denotes the full labeled dataset, while $\mathcal{D}_{\mathrm{cal}}$ and $\mathcal{D}_{\mathrm{ref}}$ are disjoint calibration and reference splits with $|\mathcal{D}_{\mathrm{cal}}| = |\mathcal{D}_{\mathrm{ref}}| = |\mathcal{D}_n|/2$.
}
\label{tab:caos_variant_comparison}
\begin{tabular}{lcccc}
\toprule
Method & Calibration Data & Reference Pool & Avg. Set Size & Empirical Coverage \\
\midrule
\gls{scos}        & $\mathcal{D}_{\mathrm{cal}}$ & $\mathcal{D}_{\mathrm{ref}}$ & 36.07 & 0.976 \\
\midrule
Split \gls{caos}  (ref + cal)       & $\mathcal{D}_{\mathrm{cal}}$ & $\mathcal{D}_{\mathrm{ref}}$ & 33.54 & 0.955 \\
Split \gls{caos} (cal)      & $\mathcal{D}_{\mathrm{cal}}$ & $\mathcal{D}_n$ & 31.91 & 0.980 \\
Split \gls{caos} (ref)      & $\mathcal{D}_n$ & $\mathcal{D}_{\mathrm{ref}}$ & 17.47 & 0.954 \\
\midrule
\gls{caos} [ours]        & $\mathcal{D}_n$ & $\mathcal{D}_n$ & \textbf{16.14} & 0.955 \\

\bottomrule
\end{tabular}
\end{table}

The results in \Cref{tab:caos_variant_comparison} show that aggregation alone provides limited gains when data are split, while joint reuse of labeled data for both calibration and reference is essential for achieving the efficiency improvements observed with \gls{caos}.

\section{Notation Summary}
\label{appendix:notation}

This appendix summarizes the notation used throughout the paper.

\paragraph{Data and indices.}
\begin{itemize}
    \item $\mathcal{X}$: input space.
    \item $\mathcal{Y}$: output (label) space.
    \item $\Dn = \{(X_i,Y_i)\}_{i=1}^n$: labeled reference dataset.
    \item $(X_{n+1},Y_{n+1})$: test example with ground truth label $Y_{n+1}$.
    \item $\Dloo = \Dn \setminus \{(X_i,Y_i)\}$: leave-one-out dataset for index $i$.
    \item $\Daug = \Dn \cup \{(X_{n+1},y)\}$: dataset augmented with a hypothetical test label $y$.
    \item $i,j$: indices of reference examples.
    \item $y \in \mathcal{Y}$: candidate output label.
\end{itemize}

\paragraph{One-shot predictors and nonconformity scores.}
\begin{itemize}
    \item $\pi_i(\cdot \mid x)$: one-shot predictor induced by reference example $(X_i,Y_i)$.
    \item $\spi{i}{x}{y}$: one-shot nonconformity score for target $(x,y)$ using predictor $\pi_i$.
    \item $\Aall{\mathcal{D}}{x}{y} = \{\spi{j}{x}{y} : (X_j,Y_j)\in\mathcal{D}\}$: pool of one-shot nonconformity scores for target $(x, y)$ based on one-shot predictors induced by examples in $\mathcal{D}$. All score pools are interpreted as multisets. 
\end{itemize}

\paragraph{Aggregation and Quantile operators.}
\begin{itemize}
    \item $\Sigma_{\min}^{k}(\mathcal{A})$: sum of the $k$ smallest elements of multiset $\mathcal{A}$.
    \item $\Qtl{\mathcal{S}}{\tau} := s_{(\min\{\lceil \tau(|\mathcal{S}|)\rceil,\,|\mathcal{S}|\})}$, where $s_{(1)}\le\cdots\le s_{(|\mathcal{S}|)}$ are the order statistics of $\mathcal{S}$.
\item $\mathcal{A} - \{x\}$: multiset difference obtained from $\mathcal{A}$ by removing one occurrence of $x$ (if present). Equivalently,
\begin{align}
\label{eqn:diff}
m_{\mathcal{A}-\{x\}}(y)
:=
\begin{cases}
\max\!\bigl(m_{\mathcal{A}}(y)-1,\,0\bigr), \qquad &\text{if } y=x,\\
m_{\mathcal{A}}(y), & \text{otherwise,}
\end{cases}
\end{align}
where $m_\mathcal{A}(y)$ denotes the multiplicity of $y$ in the multiset $\mathcal{A}$.
\item The aggregation and quantile operators depend only on the multiset of values and are permutation invariant.
\end{itemize}

\paragraph{Split conformal, \gls{caos}, and full \gls{caos} nonconformity scores.}
\begin{itemize}

\item Test-time scores
\begin{itemize}
    \item $\spitest$:
    split conformal one-shot nonconformity score induced by reference example $i$.
    \item $s_{\pi_{n+1}^y}(X_{n+1},y)$: self-score of example $(X_{n+1}, y)$ used to predict itself.
    \item $\scaostest = \minsum{k}(\allosscorestest)$:
    \gls{caos} test-time aggregated nonconformity score.

    \item $\sfulltest = \minsum{k}\bigl( \alltestscoresfull - \{s_{\pi_{n+1}^y}(X_{n+1},y)\} \bigr)$:
    full \gls{caos} test-time nonconformity score.  The multiset difference $-$ is defined in \Cref{eqn:diff}.
\end{itemize}

\item Calibration scores
\begin{itemize}
    \item $\spij$:
    split conformal calibration score for example $j$ based on one-shot predictor $\pi_i$.

    \item $\scaosi = \minsum{k}\bigl(\allosscoresi \bigr)$:
    leave-one-out \gls{caos} calibration score for example $i$.

    \item $\sfulli = \minsum{k}\bigl(\allosscoresfull - \{\spii\}\bigr)$:
    full \gls{caos} calibration score for example $i$.
\end{itemize}

\item Relationships between nonconformity scores
\begin{itemize}
    \item $\sfulltest = \scaostest$:
    test-time score equivalence between \gls{caos} and full \gls{caos}.

    \item $\sfulli \le \scaosi$:
    full \gls{caos} calibration scores dominate \gls{caos}.
\end{itemize}

\end{itemize}

\paragraph{Quantiles and prediction sets.}
\begin{itemize}
\item $\qspliti$: split conformal threshold for one-shot predictor $\pi_i$.
\item $\qcaos$: \gls{caos} calibration threshold.
    \item $\qfull$: full \gls{caos} calibration threshold.
    \item $\Cspliti(X_{n+1})$: split conformal one-shot prediction set for predictor $\pi_i $
    \item $\Ccaos(X_{n+1})$: \gls{caos} prediction set.
    \item $\Cfull(X_{n+1})$: full conformal \gls{caos} prediction set.
\end{itemize}
\clearpage


\end{document}